%% file: main.tex
\documentclass[10pt,twocolumn,letterpaper]{article}

\usepackage[pagenumbers]{cvpr} 
\input{preamble}
\definecolor{cvprblue}{rgb}{0.21,0.49,0.74}
\usepackage[pagebackref,breaklinks,colorlinks,citecolor=cvprblue]{hyperref}

\title{Quantum Multiple Rotation Averaging}

\author{
Shuteng Wang\textsuperscript{1} \quad
Natacha Kuete Meli\textsuperscript{2} \quad
Michael Möller\textsuperscript{2} \quad
Vladislav Golyanik\textsuperscript{1} \\[1ex]
\textsuperscript{1}Max Planck Institute for Informatics, SIC\quad\quad
\textsuperscript{2}University of Siegen
}

\begin{document}
\maketitle
\input{sec/abstract}    
\input{sec/paper}

\noindent \textbf{Acknowledgement.} 
This work was supported by the Deutsche Forschungsgemeinschaft (DFG, German Research Foundation), project number 534951134.
NKM and MM acknowledge support by the Lamarr Institute for Machine Learning and Artificial Intelligence.

\newpage

{
    \small
    \bibliographystyle{ieeenat_fullname}
    \bibliography{main}
}
\input{sec/suppl}

\end{document}

%% file: preamble.tex
\usepackage[dvipsnames]{xcolor}

\usepackage{amsmath,amssymb,amsthm,times}
\usepackage{bbm}
\usepackage{graphicx}
\usepackage{multicol} 
\usepackage{multirow}
\usepackage{booktabs}
\usepackage{algorithm}
\usepackage{algpseudocode}
\usepackage{braket}
\usepackage{wrapfig}
\usepackage{times}
\usepackage{epsfig}

\newtheorem{theorem}{Theorem}

\newtheorem{proposition}{Proposition}

\newtheoremstyle{boldremark}
  {\topsep}    
  {\topsep}    
  {\normalfont} 
  {}           
  {\bfseries}  
  {.}          
  { }          
  {}           

\theoremstyle{boldremark}
\newtheorem{remark}{Remark}

\newcommand{\vect}{\mathrm{vec}}
\newcommand{\SO}{\textsc{SO}}

\newcommand{\R}{\mathbb{R}}
\newcommand{\B}{\mathbb{B}}
\newcommand{\vit}{\mathit{v}}

\DeclareMathOperator*{\argmin}{arg\,min}
\usepackage{bm}

\newcommand{\Matrix}[2][0.8]{%
	\left[\hskip-5pt\begin{array}{*{16}{c}}#2\end{array}\hskip-5pt\right]}

%% file: sec/abstract.tex
\begin{abstract}
    \noindent Multiple rotation averaging (MRA) is a fundamental optimization problem in 3D vision and robotics that aims to recover globally consistent absolute rotations from noisy relative measurements. 
    Established classical methods, such as L1-IRLS and Shonan, face limitations including local minima susceptibility and reliance on convex relaxations that fail to preserve the exact manifold geometry, leading to reduced accuracy in high-noise scenarios.
    We introduce IQARS~(\textbf{I}terative \textbf{Q}uantum \textbf{A}nnealing for \textbf{R}otation \textbf{S}ynchronization), the first algorithm that reformulates MRA as a sequence of local quadratic non-convex sub-problems executable on quantum annealers after binarization, to leverage inherent hardware advantages.
    IQARS removes convex relaxation dependence and better preserves non-Euclidean rotation manifold geometry while leveraging quantum tunneling and parallelism for efficient solution space exploration.
    We evaluate IQARS's performance on synthetic and real-world datasets.
    While current annealers remain in their nascent phase and only support solving problems of limited scale with constrained performance, we observed that IQARS on D-Wave annealers can already achieve $\approx$12\% higher accuracy than Shonan---the best-performing classical method evaluated empirically.
    Project page: \url{https://4dqv.mpi-inf.mpg.de/QMRA/}.
\end{abstract}

%% file: sec/paper.tex
\section{Introduction}
\noindent Multiple rotation averaging~(MRA) stands as a fundamental group synchronization problem in 3D computer vision, aiming to recover globally consistent absolute rotations from noisy relative measurements while satisfying compositional constraints
on the \(\text{SO}(3)\) manifold~\cite{ozyecsil2017survey,hartley2013rotation,chatterjee2017robust,chatterjee2013efficient,sidhartha2021all}.
Applications and research fields that can benefit from MRA include structure-from-motion~(SfM)~\cite{cui2017hsfm,cui2015global,moulon2013global,lindenberger2021pixel}, simultaneous location and mapping (SLAM)~\cite{macario2022comprehensive,kazerouni2022survey,matsuki2024gaussian,zhu2022nice},  multi-view reconstruction \cite{goesele2006multi,yao2020blendedmvs}, virtual reality \cite{wohlgenannt2020virtual,hurst2011mobile,nehme2020comparison} and robotics \cite{maier2012real,jiang2022autonomous}, among others. 
A common approach to averaging multiple rotations involves enforcing cycle consistency and distributing errors across all cameras to ensure that the composition of rotations between any two cameras closely aligns with their respective partial measurements, i.e.,~$
R_{ij} = {R}_j {R}_i^\top$ for all $i \text{ and } j$. 

MRA presents several unique challenges such as: (1) inherent non-convexity of \(\text{SO}(3)\)'s Lie group structure, whose Riemannian geometry creates complex optimization landscapes; (2) the inevitable presence of noise in pairwise measurements, arising from feature matching errors, which disrupt cycle consistency and introduce pathological local minima~\cite{wilson2016rotations}.
These challenges manifest mathematically as an increased susceptibility of gradient-based methods to convergence in sub-optimal local minima.
Classical solvers minimize geodesic errors using iterative reweighted least squares~\cite{chatterjee2013efficient, chatterjee2017robust} like L1-IRLS, a dominating robust approach that efficiently rejects outliers \textit{but can easily get stuck in local minimas.}
More recent Shonan method provides certifiable optimality through semidefinite convex relaxation at $O(N^3)$ computational cost \textit{but it suffers from increasing relaxation gap in presence of noise}; $N$ is the number of cameras~\cite{dellaert2020shonan, wang2013exact, fredriksson2012simultaneous}.
Developing accurate methods for MRA, therefore, remains an open challenge. 

\begin{figure}[t] 
    \centering 
    \includegraphics[width = 0.48\textwidth]{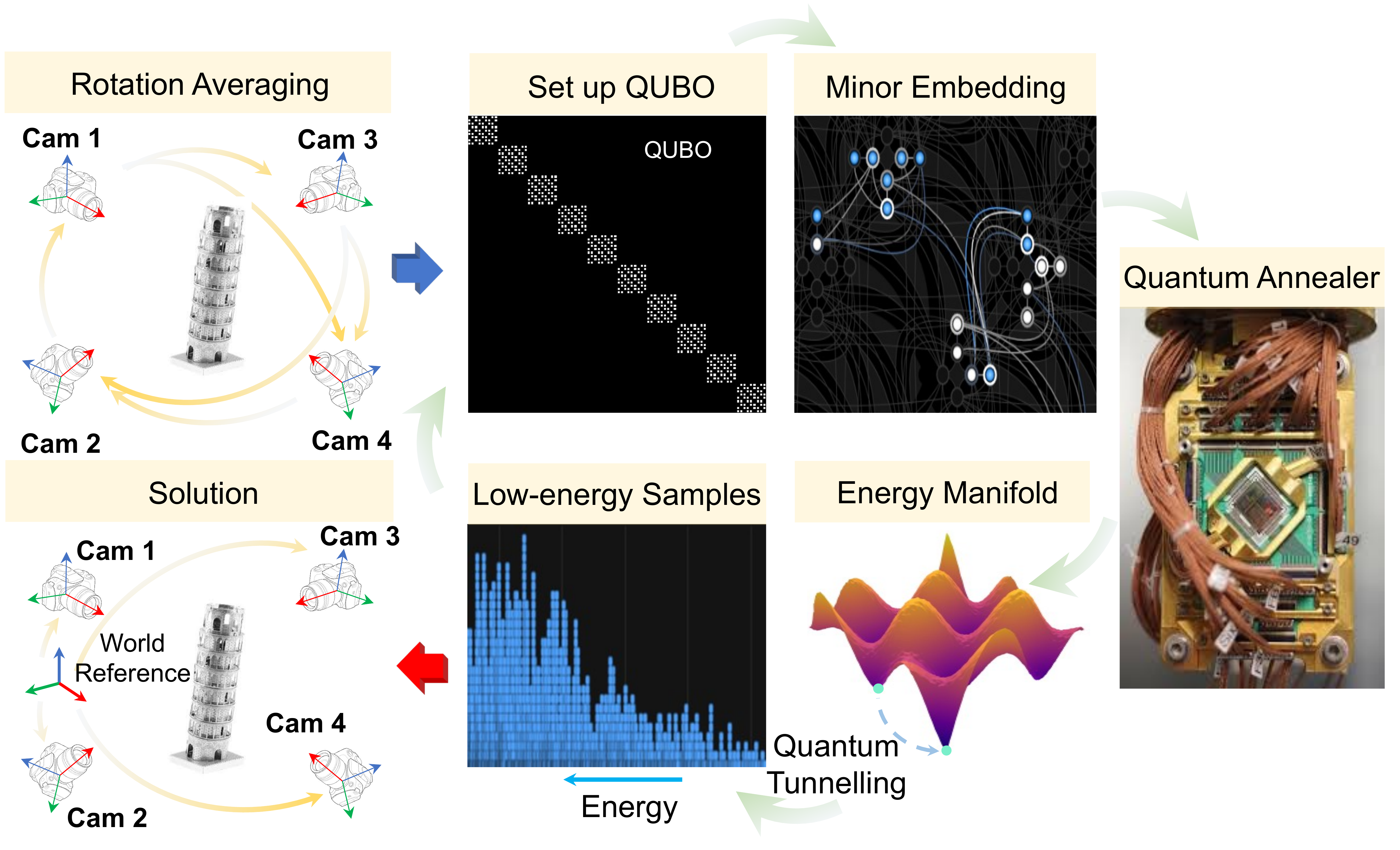} 
    \caption{IQARS formulates MRA as QUBO sub-problems executable via quantum annealing to efficiently search for high-quality solutions.
    Estimated rotations are acquired from QUBO solutions once converged. 
    Annealer's image is taken from \cite{hsu2013dwave}.} 
    \vspace{-13pt}
    \label{Preview}
\end{figure}

Quantum annealers are specialized quantum computing devices designed to sample high-quality solutions to NP-hard quadratic unconstrained binary optimization~(QUBO) problems.
Leveraging quantum mechanical properties such as quantum parallelism and tunneling, such devices enable efficient exploration of complex energy landscapes compared to classical methods, and facilitate exploration of rugged energy landscapes with multiple low-energy basins.
%
Empirical studies have substantiated their computational capabilities, demonstrating practical advantages of quantum effects in challenging optimization tasks~\cite{Lanting2014,denchev2016computational,King2024arXiv,KueteMeli2025arXiv}. 
These advantages have further motivated exploring applications of quantum annealing to practical applications; see Sec.~\ref{sec:related_work}. 

Motivated by recent advances in quantum-enhanced optimization, we identify that MRA can potentially benefit from quantum annealing to circumvent key limitations in classical approaches.
In this work, we propose IQARS---a formulation of MRA designed for execution on Ising machines such as quantum annealers, which can globally synchronize multiple rotations with high accuracy; see Fig.~\ref{Preview} for an overview. 
The practical annealing process is often perturbed by hardware-induced noise, such as thermal fluctuations, and yields sub-optimal sample solutions in the vicinity of the global optima.
We hence propose a posterior framework to refine the solution quality that exploits the observation that, under moderate noise conditions maintaining quasi-equilibrium dynamics, the annealer's output distribution approximates a Boltzmann distribution~\cite{zaech2024probabilistic}.
This framework effectively compensates for imperfections while preserving the sampling properties of QAs.
In summary, the technical contributions of this work include: 
\begin{itemize} 
  \item[$\bullet$] IQARS, the first end-to-end framework for solving MRA on Ising machines such as quantum annealers;
  \item[$\bullet$] Sequential QUBO formulations that strictly adhere to $\text{SO}(3)$ manifold geometry during optimization with monotonic residual decrease;
\end{itemize} 

\noindent While quantum annealers remain in their nascent phase with limited problem-scale capacity, empirical evaluation of IQARS on D-Wave annealing hardware shows promising performance: ${\approx}12\%$ in average lower residual than Shonan—the best-performing classical method evaluated empirically—under noisy measurements.
In the MRA setting, our posterior analysis statistically recovers higher-fidelity solutions with 60\% probability.
Collectively, these advancements position IQARS as the first MRA method leveraging quantum annealing, with distinct characteristics arising from annealing-based optimization.

\section{Related Work} 
\label{sec:related_work} 

\noindent \textbf{Rotation Averaging}. MRA became the de facto approach in contemporary 3D reconstruction pipelines for synchronizing locally estimated, noisy relative rotations into globally consistent absolute rotations~\cite{cui2017hsfm,cui2015global,lee2020robust,gao2021incremental,chen2021hybrid,chatterjee2017robust,chatterjee2013efficient,eriksson2018rotation}. 
Chatterjee et al.~\cite{chatterjee2013efficient} introduced a pioneering approach: L1-IRLS, an iterative approach for MRA with dynamic residual weighting for robust outlier rejection.
Eriksson et al.~\cite{eriksson2018rotation} proposed a rotation averaging solution through Lagrangian duality and semidefinite relaxation, while Dellaert et al.~\cite{dellaert2020shonan} formulated Shonan's dimensional lifting approach, solving the problem in $\text{SO}(p)$ ($p>3$) space before projecting to $\text{SO}(3)$.
This guarantees global convergence under mild noise while maintaining geometric constraints, i.e. orthogonality and unit-determinant, inherent in rotation averaging.
Gao et al.~\cite{gao2021incremental} presented an incremental estimation workflow for rotation averaging, circumventing the cubic-time complexity of simultaneous absolute rotation estimation.
Lee et al.~\cite{lee2020robust} proposed a robust Weiszfeld-based rotation averaging method, enhancing both computational efficiency and convergence reliability.
This, however, only works for single-rotation estimation.

\noindent \textbf{Annealing-based Quantum-enhanced Computer Vision~(QeCV)}. Quantum annealing has emerged as a promising approach to address some challenging problems in computer vision \cite{KueteMeli2025arXiv}. 
Pioneering works by Golyanik et al.~\cite{golyanik2020quantum} demonstrate its application to 2D/3D point set rotation estimation using binary-weighted basis matrices within a gravitational correspondence framework.
Birdal et al.~\cite{birdal2021quantum} leveraged quantum annealers for permutation synchronization problems, solving for cycle-consistent discrete permutations. 
Farina et al.~\cite{farina2023quantum} introduced a quantum-enhanced feature matching via preference-consensus matrices under known model count constraints.
Annealing-based approaches have since been adapted to and benefit diverse vision tasks, including shape matching~\cite{bhatia2023ccuantumm, benkner2021q}, motion segmentation~\cite{arrigoni2022quantum}, object detection~\cite{li2020quantum}, stereo matching~\cite{braunstein2024quantum}, and super-resolution~\cite{choong2023quantum}, collectively demonstrating quantum annealing's potential, though challenges remain in scaling these methods to practical problem sizes.

Building upon prior work in transformation (mainly rotation) estimation~\cite{golyanik2020quantum, meli2022iterative}, which emphasizes estimating pairwise rotations between point sets, we propose a quantum-annealing-based synchronization paradigm for global multiple rotations.
Our IQARS framework inherently preserves the $\text{SO}(3)$ manifold structure through the optimization process.
%
This fundamental methodological advancement leads to significantly increased performance, as will be demonstrated empirically.

\section{Review: Adiabatic Quantum Computing for Solving QUBO Problems} \label{s:review_QUBO}

\noindent Adiabatic quantum computing (AQC) formulates combinatorial optimization problems as Hamiltonian ground-state searches and seeks to prepare low-energy states through slow adiabatic evolution; see App.~\ref{AQC} for details.
The protocol involves an adiabatic transition between an initial Hamiltonian $H_0$, with a computationally tractable ground state (e.g., $H_0 = -\sum_{i=1}^n X_i$, where $X_i$ is the Pauli-$X$ operator acting on the $i$-th qubit of an $n$-qubit quantum system), and the problem Hamiltonian $H_P$, which encodes the solution to the target optimization problem. The time-dependent Hamiltonian is given by:
\begin{equation}\label{eq:time_dependent_hamiltonian}
H(t) = \left(1 - s(t)\right)H_0 + s(t)H_P, \quad s(t) \in [0, 1],
\end{equation}
where $s(t)$ is a scheduling function that governs the adiabatic evolution, smoothly transitioning the system from $H_0$ (at $s(t) = 0$) to $H_P$ (at $s(t) = 1$).

\noindent \textbf{QUBO as an AQC-Compatible Problem.}
QUBO problems, defined as:
\begin{equation}\label{eq:qubo}
\min_{\mathbf{x} \in \{0, 1\}^n} \mathbf{x}^\top Q \mathbf{x} + \mathbf{c}^\top \mathbf{x},
\end{equation}
where $Q \in \mathbb{R}^{n \times n}$ is a symmetric matrix and $\mathbf{c} \in \mathbb{R}^n$, are naturally suited for AQC due to their quadratic structure, which aligns with the interaction terms in quantum Hamiltonians.
To embed a QUBO problem into AQC, each binary variable $x_i \in \{0, 1\}$ is mapped to a qubit, where the classical states $0$ and $1$ correspond to the eigenstates of the Pauli-$Z$ operator $Z_i$, with eigenvalues $+1$ and $-1$, respectively. This mapping is formalized via the transformation: $x_i \mapsto \frac{1}{2}\left(I - Z_i\right) $. Substituting this into the QUBO objective function $f({x})$ yields a quantum Hamiltonian:
\begin{equation}\label{eq:full_hamiltonian}
H_P = \sum_{i=1}^n \sum_{j \geq i}^n Q_{ij} \frac{1}{4}\left(I - Z_i\right)\left(I - Z_j\right) + \sum_{i=1}^n c_i \frac{1}{2}\left(I - Z_i\right).
\end{equation}
Upon simplification, constant terms are eliminated, resulting in an Ising-type Hamiltonian:
\begin{equation}\label{eq:ising_hamiltonian}
H_P = \sum_{i=1}^n h_i Z_i + \sum_{i<j}^n J_{ij} Z_i Z_j,
\end{equation}
where $h_i$ represents local magnetic fields and $J_{ij}$ encodes inter-qubit-qubit couplings. The ground state of $H_P$ encodes the optimal solution to the original QUBO problem \eqref{eq:qubo}, thereby establishing a connection between combinatorial optimization and quantum adiabatic evolution.

\section{Quantum MRA} \label{sec:quantumRA}

\noindent The classical MRA problem objective is:
\begin{equation} \label{RA}
    \min_{ R_1,  \dots,  R_N \in \text{SO}(3)} \sum_{(i,j)} \| \tilde{ R}_{ij}  R_i -  R_j \|_F^2.
\end{equation}
$\text{SO} (3) $ is the set of matrix elements $R$ that are orthogonal and have a unit determinant. $\left\{ R_1, \cdots,  R_N\right\} \subset \text{SO} (3)$ represent a set of absolute $3$D rotational matrices w.r.t.~a global reference frame. $\tilde{R}_{ij}$ is the observed relative rotation; see App.~\ref{sec:preliminaries} for detailed review.
The process of transforming Eq.~\eqref{RA} into a QUBO form that can be sampled on a quantum annealer can be decomposed into two independent primary steps: 1) reformulating the original optimization problem~\eqref{RA_final} into an equivalent quadratic optimization problem through algebraic manipulation and constraint embedding; 2) representing rotation-parameterizing variables as binary strings (e.g., via fixed-point quantization).
We present intermediate steps below:
\begin{proposition} \label{proposition1}
Problem \eqref{RA_final} can be formulated as a quadratic optimization problem in matrix form:
\begin{equation}
\label{eq:matricization}
   \min_{ R_1, \dots,  R_N}  \sum_{(i,j)} \| \tilde R_{ij}  R_i -  R_j \|_F^2 
   \ = \ \min_{ R} \   R^\top {Q}  R,
\end{equation}
with
$  R =
        \Matrix{
        \cdots \
        \vect( R_i)^\top \
        \cdots
        }^\top
        \in \mathbb{R}^{9N\times 1}$, 
where 
``$\vect(\cdot)$'' denotes the vectorization of a matrix by stacking its columns, and ${Q} \in \mathbb{R}^{ 9N \times 9N}$ is the cost matrix:
\begin{equation} \label{Q_cal}
{Q} = 
    -2 
    \Matrix{
         I \otimes  R_{11} ^\top & \cdots &  I \otimes  R_{1N} ^\top \\
        \vdots & \ddots & \vdots \\  I \otimes  R_{N1} ^\top & \cdots &  I \otimes  R_{NN} ^\top
    }.
\end{equation}
\end{proposition}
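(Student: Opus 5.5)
We expand each summand and use orthogonality to discard the terms that are constant on the feasible set. For a single edge,
\[
\|\tilde R_{ij}R_i - R_j\|_F^2 = \operatorname{tr}(R_i^\top \tilde R_{ij}^\top \tilde R_{ij} R_i) + \operatorname{tr}(R_j^\top R_j) - 2\operatorname{tr}(R_j^\top \tilde R_{ij} R_i).
\]
Since $\tilde R_{ij}$, $R_i$ and $R_j$ all lie in $\text{SO}(3)$, the first two traces both equal $3$. Each edge therefore contributes the constant $6$ plus the cross term $-2\operatorname{tr}(R_j^\top \tilde R_{ij} R_i)$. Constants do not change the minimizer, so the two sides of \eqref{eq:matricization} are to be read as equal up to an additive constant ($6$ times the number of edges), with the same argmin. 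I would say explicitly that the equality holds in this sense over $\text{SO}(3)^N$.

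Next I vectorize the cross term with the identities $\operatorname{tr}(A^\top B) = \vect(A)^\top \vect(B)$ and $\vect(ABC) = (C^\top \otimes A)\vect(B)$. Taking $A = R_j$, $B = \tilde R_{ij} R_i = \tilde R_{ij} R_i I$ gives
\[
\operatorname{tr}(R_j^\top \tilde R_{ij} R_i) = \vect(R_j)^\top (I \otimes \tilde R_{ij})\vect(R_i).
\]
This is a bilinear form in the stacked vector $R$, supported on the $(j,i)$ $9\times 9$ block. I then symmetrize by writing it as the average of the form and its transpose. The transposed version sits on block $(i,j)$ and reads $\vect(R_i)^\top (I \otimes \tilde R_{ij}^\top)\vect(R_j)$.

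Summing over edges and collecting blocks gives $Q$ with $(i,j)$ block $-2(I\otimes R_{ij}^\top)$, matching \eqref{Q_cal}. Here $R_{ij}$ denotes the measured relative rotation, with $R_{ji} = R_{ij}^\top$, and the block is zero when $(i,j)$ is not an edge. The diagonal blocks are either zero or absorbed into the constant, because $\vect(R_i)^\top\vect(R_i) = 3$.

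The main obstacle is bookkeeping rather than mathematics. I must track the orientation convention ($\tilde R_{ij}R_i \approx R_j$ versus the paper's $R_{ij} = R_jR_i^\top$), the factor $2$ that comes from symmetrization versus counting each unordered edge once, and the placement of the transpose inside the Kronecker product. I would fix the convention that the sum runs over ordered pairs with $\tilde R_{ji} = \tilde R_{ij}^\top$, verify one block entrywise, and then state the equivalence modulo the dropped constant.
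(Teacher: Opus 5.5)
Your proof is correct and follows the paper's route: expand the Frobenius norm, drop the constant terms $\|\tilde R_{ij}R_i\|_F^2=\|R_j\|_F^2=3$ by orthogonality, and vectorize the trace with the Kronecker identity to assemble the block matrix. The paper differs only in writing the cross term directly as $\vect(R_i)^\top(I\otimes \tilde R_{ij}^\top)\vect(R_j)$ on block $(i,j)$ and summing over ordered pairs, so your symmetrization step is unnecessary, though it yields the same $Q$ under your convention $\tilde R_{ji}=\tilde R_{ij}^\top$; your reading of the identity as holding up to an additive constant on $\text{SO}(3)^N$ is more precise than the paper's literal ``$=$''.
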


\begin{proof}
    see App.~\ref{proof_p1}.
\end{proof}

\subsection{Taylor Linearization on {\large $\text{SO}(3)$} Manifold} \label{SO(3)}

\noindent Instead of parametrizing rotation matrices on a binary-weighted matrix basis~\cite{golyanik2020quantum}, we leverage the relationship between the orthogonal Lie group $\text{SO}(3)$ and its algebra $\mathfrak{so}(3)$ to ensure optimization strictly within the $\text{SO}(3)$ manifold~\cite{meli2022iterative}. 
Any matrix ${R}_i\in\text{SO}(3)$ can be generated from its tangent vector $\bm{v}_i = (\vit_i^1, \vit_i^2, \vit_i^3)^\top \in \mathbb{R}^3$ via the exponential map implemented through Rodrigues' rotation formula:
\begin{equation} \label{eq:exponentialmap}
{R}_i (\bm{v}_i) = \exp(\mathcal{R}(\bm{v}_i)) = I + \frac{\sin\theta}{\theta} \mathcal{R}(\bm{v}_i) + \frac{1-\cos\theta}{\theta^2} \mathcal{R}^2(\bm{v}_i),
\end{equation}
where $\mathcal{R}: \mathbb{R}^3 \to \mathfrak{so}(3)$ denotes Lie algebra isomorphism:
\begin{equation}
\mathcal{R}(\bm{v}_i) := \begin{pmatrix}
0 & -\vit_i^3 & \vit_i^2 \\
\vit_i^3 & 0 & -\vit_i^1 \\
-\vit_i^2 & \vit_i^1 & 0
\end{pmatrix}, \quad \mathcal{R}(\bm{v}_i)^\top = -\mathcal{R}(\bm{v}_i).
\end{equation}
The magnitude $\theta = \|\bm{v}_i\|_2 \in \mathbb{R}$ defines the rotation angle and the unit vector $\bm{x}_i = \bm{v}_i/\|\bm{v}_i\|_2$ defines the rotation axis. 
By aggregating all tangent vectors $\bm{v}_i$ into $\bm{v} = (\ldots, \bm{v}_i^\top, \ldots)^\top \in \mathbb{R}^{3N\times1}$ and vectorizing the rotations as $R(\bm{v}) = (\ldots, \text{vec}(R_i(\bm{v}_i))^\top, \ldots)^\top \in \mathbb{R}^{9N \times 1}$, we reformulate the right-hand side of Eq.~\eqref{eq:matricization} as:
\begin{equation} \label{eq:unconstrained}
\min_{\bm{v}\in\mathbb{R}^{3N}} R(\bm{v})^\top Q R(\bm{v}).
\end{equation}
This transformation reduces the search space from a nonlinear $\text{SO}(3)$ manifold to Euclidean tangent vectors $\bm{v}_i$, where we solve for $\argmin$ of problem \eqref{eq:unconstrained}.
Due to the non-linearity of the exponential map, we expand $ R_i$ around its current guess $\bm{v}_i ^{k}$ up to the first order in the tangent space.
The next iterate $\bm{v}_i ^{k+1}$ is obtained via solving the resulting locally non-convex quadratic sub-problem, ensuring that updates always remain within the tangent space of \(\text{SO}(3)\).

Specifically, we iteratively replace $ R(\bm v)$ in problem \eqref{eq:unconstrained} with its first-order Taylor linearization:
\begin{equation} \label{eqn:linearization}
\Hat{R}^k (\Delta \bm{v}) := R(\bm{v} ^{k}) + \nabla R (\bm{v} ^{k})^\top \Delta \bm{v},
\end{equation} 
and optimize over $\Delta \bm{v}$, i.e., 
\begin{equation}
\label{eq:linearized}
\begin{aligned}
   \Delta \bm{v}^{k+1} &= \argmin_{\Delta \bm v \in \mathbb{R}^{3N}} \  \Hat{R}^k (\Delta \bm{v}) ^\top {Q} \Hat{R}^k (\Delta \bm{v}), \\
   \bm{v}^{k+1}  &= \bm{v}^{k} +  \Delta \bm{v}^{k+1},
   \end{aligned}
\end{equation}

\noindent In Eq.~\eqref{eqn:linearization}, the gradient $\nabla  R(\bm{v}^{k}) \in \R^{3N \times 9N}$ is a block diagonal matrix with partial derivative-wise vectorized $\nabla  R_i(\bm v_i^{k})\in \R^{3 \times 9}$ over the diagonal.
\begin{remark}
    While the exact mapping $R(\bm{v}^{k+1})$ inherently preserves the $\text{SO}(3)$ manifold structure by construction, its first-order Taylor approximation $\Hat{R}^k(\Delta \bm{v}^{k+1})$ can deviate from $\text{SO}(3)$ on the scale of $\mathcal{O}(\|\Delta \bm{v}^{k+1}\|^2)$.
\end{remark}
To improve approximation fidelity while ensuring convergence, we propose: 1) iteration-adaptive box constraint in \eqref{eq:linearized} (similar to the one used in Ref.~\cite{delilbasic2023single,meli2022iterative}), i.e.~enforce  
\begin{equation}
\label{eq:centering}
    \|\Delta \bm{v}\|_\infty \leq \delta^k, 
\end{equation} 
and 2) an orthogonality-promoting soft penalty term $\alpha\|\Hat{R}^k(\Delta \bm{v})\|_F^2$ (see App. \ref{penalization_effect} on how the penalty encourages the (blocks of the) Taylor approximation to maintain $\text{SO} (3)$ proximity).
We summarize our approach as follows:
\begin{proposition} \label{proporsition2}
Inserting the linearization \eqref{eqn:linearization} into the minimization Problem \eqref{eq:linearized} and incorporating the softly-regularized box constraint, we obtain the QUBO problem: 
\begin{equation}
\label{eq:final}
\begin{aligned}
   \Delta \bm{v}^{k+1} &= \argmin_{\|\Delta \bm v\|_\infty \leq \delta^k} \  \Delta \bm{v} ^\top \hat{{Q}} \Delta \bm{v} +   \hat{c}^\top \Delta \bm{v} , \\
   \bm{v}^{k+1}  &= \bm{v}^{k} +  \Delta \bm{v}^{k+1},
   \end{aligned}
\end{equation}
with
\begin{equation} \label{linearize}
    \begin{aligned}
        \hat{ Q} &:= \nabla  R(\bm{v}^{k}) \, ( Q + \alpha N  I_{9N}) \, \nabla  R(\bm{v}^{k})^\top, \\
        \hat{\bm c}  &:= 2 \, \nabla  R(\bm{v}^{k}) \, ( Q + \alpha N  I_{9N}) \,  R(\bm{v}^{k}).
    \end{aligned}
\end{equation}
\end{proposition}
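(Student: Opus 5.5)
We substitute the linearization \eqref{eqn:linearization} into the regularized objective and expand. Concretely, the objective of \eqref{eq:linearized} augmented with the soft penalty reads
\[
F^k(\Delta\bm v) := \Hat{R}^k(\Delta\bm v)^\top Q\, \Hat{R}^k(\Delta\bm v) + \alpha N \|\Hat{R}^k(\Delta\bm v)\|_2^2
= \Hat{R}^k(\Delta\bm v)^\top (Q+\alpha N I_{9N})\, \Hat{R}^k(\Delta\bm v).
\]
Here $\|\cdot\|_F$ of the stacked blocks coincides with the Euclidean norm of the $9N$-vector. The scaling of the penalty by $N$ is a normalization choice, which we take as the definition of the weight, so that it enters as $\alpha N I_{9N}$. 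The first step is to note that $Q$ from Proposition~\ref{proposition1} is symmetric. Indeed, the $(j,i)$ block $I\otimes R_{ji}^\top$ equals $(I\otimes R_{ij}^\top)^\top$ because $R_{ji}=R_{ij}^\top$, and the diagonal blocks are handled by the same convention. Hence $M:=Q+\alpha N I_{9N}$ is symmetric as well.

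Next we write $\Hat{R}^k(\Delta\bm v) = a + B^\top\Delta\bm v$, with $a:=R(\bm v^k)\in\R^{9N}$ and $B:=\nabla R(\bm v^k)\in\R^{3N\times 9N}$. Expanding the quadratic form gives
\[
F^k(\Delta\bm v)=\Delta\bm v^\top B M B^\top \Delta\bm v + a^\top M B^\top\Delta\bm v + \Delta\bm v^\top B M a + a^\top M a .
\]
Since $M$ is symmetric, the two cross terms are equal scalars, and together they give $2(BMa)^\top\Delta\bm v$. This identifies $\hat Q = BMB^\top$ and $\hat{\bm c}=2BMa$, exactly as in \eqref{linearize}. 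The term $a^\top M a$ does not depend on $\Delta\bm v$, so we drop it without changing the $\argmin$. The box constraint \eqref{eq:centering} is carried over unchanged, and the update $\bm v^{k+1}=\bm v^k+\Delta\bm v^{k+1}$ is copied from \eqref{eq:linearized}. This gives \eqref{eq:final}.

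Finally, the resulting problem is a box-constrained quadratic program in continuous variables. Calling it a QUBO relies on the fixed-point binarization mentioned before Proposition~\ref{proposition1}. Each coordinate is written as
\[
\Delta v_\ell = \delta^k\Big(-1 + \sum_{b} 2^{-b+1} x_{\ell b}/(1-2^{-K})\Big),
\]
or through any affine map $\Delta\bm v = P\mathbf x + \bm d$ with $\mathbf x\in\{0,1\}^{3NK}$ whose range lies in $[-\delta^k,\delta^k]^{3N}$. Substituting this map yields the matrix $P^\top\hat Q P$ and the linear term $P^\top(2\hat Q\bm d+\hat{\bm c})$, which is of the form \eqref{eq:qubo}. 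Because the image of the encoding automatically satisfies the box constraint, the problem becomes unconstrained in $\mathbf x$.

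The main obstacle is not computational but interpretive. First, one must justify treating the penalty as a quadratic form with $\alpha N I$; this follows once the norm is taken over the full stacked vector. Second, one must make sure that "QUBO" is read modulo the binary encoding, since the stated problem is over a continuous box. I would state both points explicitly. The rest of the argument is routine algebra.
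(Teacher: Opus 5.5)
Your proof is correct. It is the same direct substitute-and-expand computation the paper relies on: you get $\hat Q = BMB^\top$ and $\hat{\bm c} = 2BMa$ and drop the constant $a^\top M a$. The paper gives no separate proof of this proposition, but the same algebra appears in its appendix proof of Proposition~\ref{proporsition3}. Your explicit remarks are points the paper leaves implicit: the factor $2$ in $\hat{\bm c}$ requires $Q$ to be symmetric, i.e.\ the convention $\tilde R_{ji}=\tilde R_{ij}^\top$ on the noisy measurements; the $N$ in $\alpha N I_{9N}$ is a choice of weight; and the problem becomes a QUBO only after the binary encoding.
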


\begin{figure}[t] 
    \centering 
    \includegraphics[width = 0.48\textwidth]{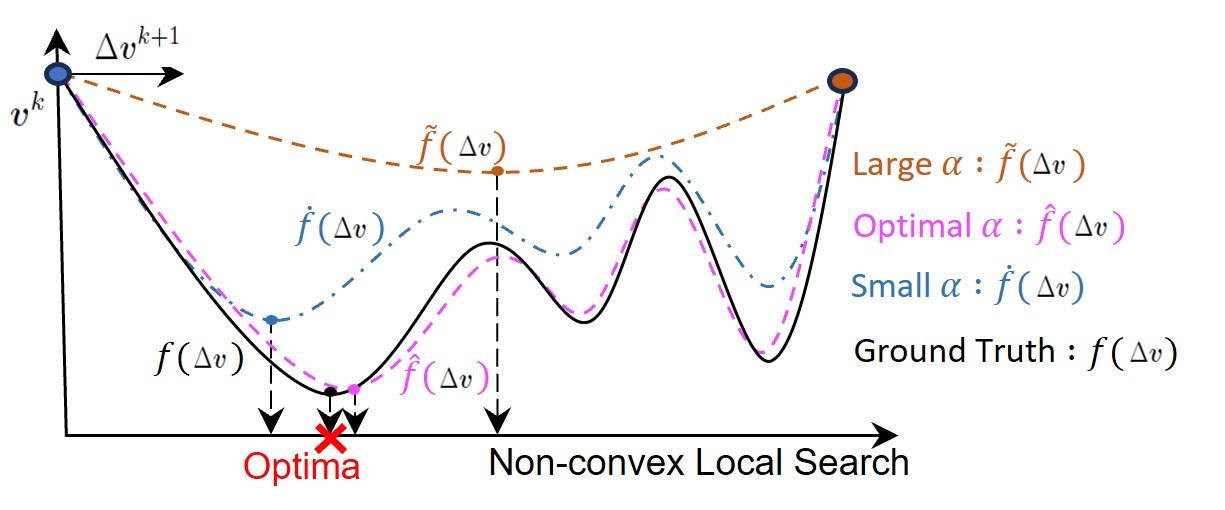} 
    \caption{Schematic local search behavior visualization under varying penalty strengths $\alpha$.} 
    \vspace{-5pt}
    \label{optimim_alpha}
\end{figure}
We observe that aggressive regularization can enforce positive semi-definiteness~(PSD) in $\hat{Q}$—--thereby convexifying Eq.~\eqref{eq:final} and enabling classical efficient solvability.
However, it simultaneously distorts the landscape by overwhelming the objective function and introduces artificial minima that fundamentally misrepresent the true optimization landscape.
This marks calibration of $\alpha$ crucial for the optimal performance such that: 1) Taylor approximation's deviation from $\text{SO}(3)$ within $\mathcal{O}(\|\Delta\bm{v}\|^2)$ bounds is constrained and 2) distortion of the original optimization landscape is minimized; see Fig.~\ref{optimim_alpha} for a schematic visualization.
\begin{remark}
    The exponential map $\mathfrak{so}(3) \to \text{SO}(3)$ preserves Lie group traversal at each iteration $\bm{v}^{k+1} = \bm{v}^k + \Delta\bm{v}$ throughout the optimization.
\end{remark}

\subsection{Binary Encoding of Iterative Update} \label{binary_encoding}

\noindent We discretize the search space $\Delta \bm{v}$ and leverage binary encodings to represent the discretized space for compatibility with the qubit-based Ising model supported by the annealer; see Sec.~\ref{s:review_QUBO}.
Specifically, we represent $\Delta \bm{v}$ via a binary vector $\bm{q} \in \{0, 1\}^m$, where each binary variable $q_{i}$ corresponds to the state of a logical qubit on the annealer. 
For hardware implementations (e.g., D-Wave systems), these binary variables are often mapped to spin states $\sigma_{i} \in \{-1, +1\}$ via the transformation $q_{i} = \frac{\sigma_{i} + 1}{2}$; see Sec.~\ref{s:review_QUBO}.
To satisfy the constraint $\|\Delta \bm{v}\|_\infty \leq \delta^k$, we discretize the interval $[-\delta^k, \delta^k]$ uniformly using $m$ qubits per dimension.
Specifically, we express $i$-th discretized value of $\Delta \bm{v}$ as:
\begin{equation} \label{interval_rep}
    (\Delta \bm{v})_i = -\delta^k + \frac{2 \delta^{k}}{s} \sum_{\ell = 0}^{m-1} 2^{\ell} q_{i,\ell}, \quad \text{where} \quad s = 2^m - 1.
\end{equation}
where factor $2 \delta^{k}/s$ with $s := 2^m - 1$ serves as a normalization factor and ensures that  $\|\Delta \bm v\|_\infty \leq \delta^k$.
Note that Eq.~\eqref{interval_rep} discretizes the search space to 
\begin{equation} \label{discretized_search_space}
(\Delta \bm v)_i =
\left\{
-\delta^k + \frac{2 T \delta^{k}}{s}
\;:\;
T \in \mathbb{N},\;
T \leq 2^m - 1
\right\}
\end{equation}
representing \textit{exponentially-many, i.e. $2^m$}, equidistant points per dimension.
We then write the variable vector $\Delta \bm v$ more compactly via a matrix-vector product as 
\begin{equation}
\label{eq:trick2}
    \Delta \bm v = -\delta^k\mathbf{1}_{3N} +  D \bm{q},
\end{equation}
where
\begin{equation} \label{binarization}
    \begin{aligned} 
        \bm{q} &= 
        \Matrix{
        \bm{q}_1 \\
        \vdots \\
        \bm{q}_{3N}
        } \in \mathbb{R}^{3Nm}
        \quad ,\text{with} \quad
        \bm{q}_i = 
        \Matrix{
        q_{i, 0} \\
        \vdots \\
        q_{i, m-1}
        } \in \mathbb{R}^{m},
        \\
         D &= 
        \frac{2}{s} 
        \Matrix{
        \delta^{k} &  & {0} \\
        & \ddots &  \\
        {0} &  & \delta^{k}
        }
        \otimes \Matrix{2^0 & \cdots &2^{m-1}} \in \mathbb{R}^{ 3N \times 3Nm},
    \end{aligned}
\end{equation}
$\mathbf{1}_{3N}$ is a vector of length $3N$ with all entries being one, and $\bm{q} \in \{0, 1\}^{3Nm}$ stacks all binary variables.
%
\begin{remark}
The binary encoding's approximation precision when representing arbitrary floating-point values improves exponentially with scaling $\mathcal{O}(2^m)$.
\end{remark}

\begin{proposition} \label{proporsition3}
Binary encoding of the continuous search interval makes $\bm q \in \B^{3Nm}$ the new optimization variable, changing Problem~\eqref{eq:final} into
    \begin{equation} \label{final_QUBO}
        \argmin_{\bm{q} \in \B^{3Nm}} \  \bm{q}^\top \tilde{ Q} \bm{q} + \bm{q}^\top \tilde{\bm c},
    \end{equation}
    with
\begin{equation} \label{linearize2}
    \begin{aligned}
        \tilde{ Q} &:=  D^\top \hat{ Q}  D, \\
        \tilde{\bm c}  &:=   D^\top (\hat{\bm c} - 2\delta^k\hat{ Q}\mathbf{1}_{3N}). 
    \end{aligned}
\end{equation}
\end{proposition}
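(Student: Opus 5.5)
The plan is to substitute the affine parametrization \eqref{eq:trick2}, $\Delta\bm v = -\delta^k\mathbf{1}_{3N} + D\bm q$, into the objective of Problem~\eqref{eq:final} and expand. First we check that this substitution is legitimate as a change of feasible set. By \eqref{interval_rep}, every $\bm q\in\B^{3Nm}$ yields $\|\Delta\bm v\|_\infty\le\delta^k$, because each coordinate $-\delta^k+\tfrac{2\delta^k}{s}T$ with $0\le T\le s$ lies in $[-\delta^k,\delta^k]$. Hence the box constraint is satisfied automatically, and restricting $\Delta\bm v$ to the discretized set \eqref{discretized_search_space} turns the constrained problem into an unconstrained one over $\bm q$. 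Since the map $\bm q\mapsto\Delta\bm v$ is a bijection onto that grid (binary representation of $T$), minimizing over $\bm q$ is the discretized version of \eqref{eq:final}.

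Next we expand. Write $\bm b := -\delta^k\mathbf{1}_{3N}$. Then
\begin{equation*}
(\bm b + D\bm q)^\top\hat Q(\bm b + D\bm q) + \hat{\bm c}^\top(\bm b + D\bm q)
= \bm q^\top D^\top\hat Q D\bm q + \bm q^\top D^\top(\hat Q+\hat Q^\top)\bm b + \bm q^\top D^\top\hat{\bm c} + \mathrm{const}.
\end{equation*}
Here we use that $\hat Q$ is symmetric, because $Q$ in \eqref{Q_cal} is symmetric: its $(j,i)$ block is $I\otimes R_{ji}^\top = I\otimes R_{ij}$, the transpose of the $(i,j)$ block. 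Adding $\alpha N I$ preserves symmetry, and so does conjugation by $\nabla R(\bm v^k)$ in \eqref{linearize}. The cross term therefore becomes $2\bm q^\top D^\top\hat Q\bm b = -2\delta^k\bm q^\top D^\top\hat Q\mathbf{1}_{3N}$. The constants $\bm b^\top\hat Q\bm b+\hat{\bm c}^\top\bm b$ do not depend on $\bm q$ and can be dropped from the argmin. Collecting terms gives $\tilde Q = D^\top\hat Q D$ and $\tilde{\bm c}=D^\top(\hat{\bm c}-2\delta^k\hat Q\mathbf{1}_{3N})$, as claimed in \eqref{linearize2}.

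The computation is routine. The only point needing care is the symmetry of $\hat Q$, which is what collapses the cross term to the factor $2$. I would verify it through the transpose relation $R_{ji}=R_{ij}^\top$ implicit in \eqref{Q_cal}, as noted above. Optionally, one can remark that because $q_i^2=q_i$ for binary variables, diagonal terms could be folded into $\tilde{\bm c}$. This is not needed, since the stated form \eqref{final_QUBO} already matches the QUBO template \eqref{eq:qubo}.
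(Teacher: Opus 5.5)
Your proof is correct and takes the same route as the paper's: substitute $\Delta\bm v=-\delta^k\mathbf{1}_{3N}+D\bm q$, expand, merge the two cross terms into $-2\delta^k\bm q^\top D^\top\hat Q\mathbf{1}_{3N}$, and drop the $\bm q$-independent constants. You also make explicit two points the paper's proof uses silently. One is the symmetry of $\hat Q$, which the factor $2$ requires; it rests on the assumption $\tilde R_{ji}=\tilde R_{ij}^\top$, or else on symmetrizing $Q$ first. The other is that the binary problem is a grid restriction of the box-constrained problem rather than an exact equivalence.
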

\begin{proof}
    See App.~\ref{proof_p3}.
\end{proof}
Upon obtaining the optimal $\bm{q}$ via \cref{final_QUBO}, we update the solution iterate as $\bm v^{k+1}$ via \cref{eq:trick2,eq:linearized}.
The iteration continues until the energy update converges.

\subsection{Required Hardware Resource Analysis} \label{scalability}
\noindent Practical implementation of IQARS and its performance is constrained by available hardware resources.
For synchronizing a camera graph of $N$ nodes, our algorithm maintains favorable resource scaling properties and require $3Nm$ logical qubits (see Prop.~\ref{proporsition3}) while achieving floating-point representation error on the order of $\mathcal{O}(\delta^{(k)}/2^m)$. 
The scaling behavior demonstrates: 1) \textit{linear} dependence on problem size and 2) \textit{logarithmic} dependence on solution precision, suggesting strong algorithmic potentials for large-scale MRA problems as hardware continues to evolve.
\begin{remark}
    With limited quantum resource budgets, the multiplicative nature of resource requirements introduces trade-offs between problem scales and solution quality.
\end{remark}

\subsection{Algorithmic Protocol of IQARS}

Large-scale MRA problems present complex optimization landscapes that typically require fine discretization, i.e. large $m$, for accurate solutions.
However, current quantum hardware limitations impose strict constraints on available qubit resources for such a purpose.
If discretization is coarse, the global optimum of Eq.~\eqref{final_QUBO} can easily lie outside the discretized search grid defined by Eq.~\eqref{discretized_search_space}, potentially leading to oscillatory convergence behavior.
To ensure robust convergence behavior, we implement an adaptive search protocol inspired by \cite{berger1984adaptive,berger1989local,meli2022iterative} that dynamically adjusts the search radius $\delta$ based on local update magnitudes during optimization.
While this heuristic approach does not provide formal convergence guarantees---particularly with respect to maintaining global optimality through successive radius contractions---our empirical results demonstrate its effectiveness in achieving stable convergence to high-quality solutions.
This adaptive mechanism proves particularly valuable in resource-constrained scenarios where coarse discretization would otherwise severely limit solution quality.
We outline the algorithm in Alg.~\ref{IterativeQRA}.
\begin{algorithm}[H] 
\caption{\textit{Iterative Quantum Approach for Rotation Synchronization (IQARS)}} \label{IterativeQRA}
\begin{algorithmic}[1]
    \State \textbf{Input:} $ R_{ij}$: Pairwise relative rotations 
    \Statex \hspace{3em} Maxiter: Maximum number of iterations
    \Statex \hspace{3em} $\bm{v}^{0}$: Initial rotation guess
    \Statex \hspace{3em} $\delta^{0}$: Search window radius 
    \Statex \hspace{3em} $\kappa$ : Threshold for decreasing search radius $\delta$
    \Statex \hspace{3em} $m$: Qubits required for search discretization
    \Statex \hspace{3em} $\epsilon$: Stopping criterion
    \Statex \hspace{3em} $\tau$: Annealing rate
    \State \textbf{Initialization:} $j$, $e$ $\leftarrow$ 0;\quad Construct $ Q$ according to Eq.~\eqref{Q_cal};\quad Initialized identity rotations: $ R(\bm v^{0}) = I$.
    \State \textbf{while} $j$ $\leq$ Maxiter \textbf{do}:
    \State \hspace{1em} Compute gradient $\nabla  R(\bm{v}^{j})$ according to Eq.~\eqref{eq:exponentialmap}
    \State \hspace{1em} Construct $\hat{ Q}$ and $\hat{\bm c}$ according to Eq.~\eqref{linearize}
    \State \hspace{1em} Construct $ D$ according to Eq.~\eqref{binarization}
    \State \hspace{1em} Solve the QUBO problem~\eqref{final_QUBO} and measure binary
    \Statex \hspace{1em} solution vector $\mathbf{q}$
    \State \hspace{1em} $j \leftarrow j + 1$
    \State \hspace{1em} Update solution $\bm{v}^{j}$ and $ R(\bm{v}^{j})$ using Eq.~\eqref{eq:linearized} and 
    \State \hspace{1em} \eqref{eq:exponentialmap};\space update $e \leftarrow \operatorname{Avg}\,\sum_{(i,j)} \| R_{ij}  R_i -  R_j \|_F^2$
    \State \hspace{1em} \textbf{If} $e < \epsilon$ \textbf{do}: \space $\operatorname{break}$
    \State \hspace{1em} \textbf{Else}:
    \State \hspace{3em} \textbf{If} $|| R(\bm{v}^{j}) -  R(\bm{v}^{j-1})||_F < \kappa$  \textbf{do}:
    \State \hspace{5em} $\delta \leftarrow \frac{\delta}{\tau}$; \space $\kappa \leftarrow \frac{\kappa}{\tau}$ 
    \State \hspace{3em} \textbf{End}
\State \textbf{End} 
\State \Return Residual: $e$; iteration count: $j$; synchronized rotations: $\bm{v}^{j},  R(\bm{v}^{j})$. 
\end{algorithmic}
\end{algorithm}
\vspace{-10pt}

\subsection{Solution Refinement via Posterior Analysis} \label{sec:postanal}

\noindent Quantum annealers return solutions by sampling from low-energy states of a pre-defined QUBO energy landscape.
Under ideal adiabatic conditions, the system will converge to the ground state, returning the global optimal solution.
However, ensuring adiabatic evolution requires problem-specific annealing schedules, and in practice, environmental interactions such as thermal excitations could induce non-adiabatic effects that lead to a distribution of low-energy states well-approximated by classical Boltzmann distributions rather than deterministic ground-state~\cite{zaech2024probabilistic,schuld2021machine}.
To improve the solution quality due to non-adiabatic effects leveraging the probabilistic outputs, we propose a \textit{physics-inspired energy-weighted voting} approach that refines solution quality.
\begin{figure}[t]
    \centering 
    \includegraphics[width = 0.48\textwidth]{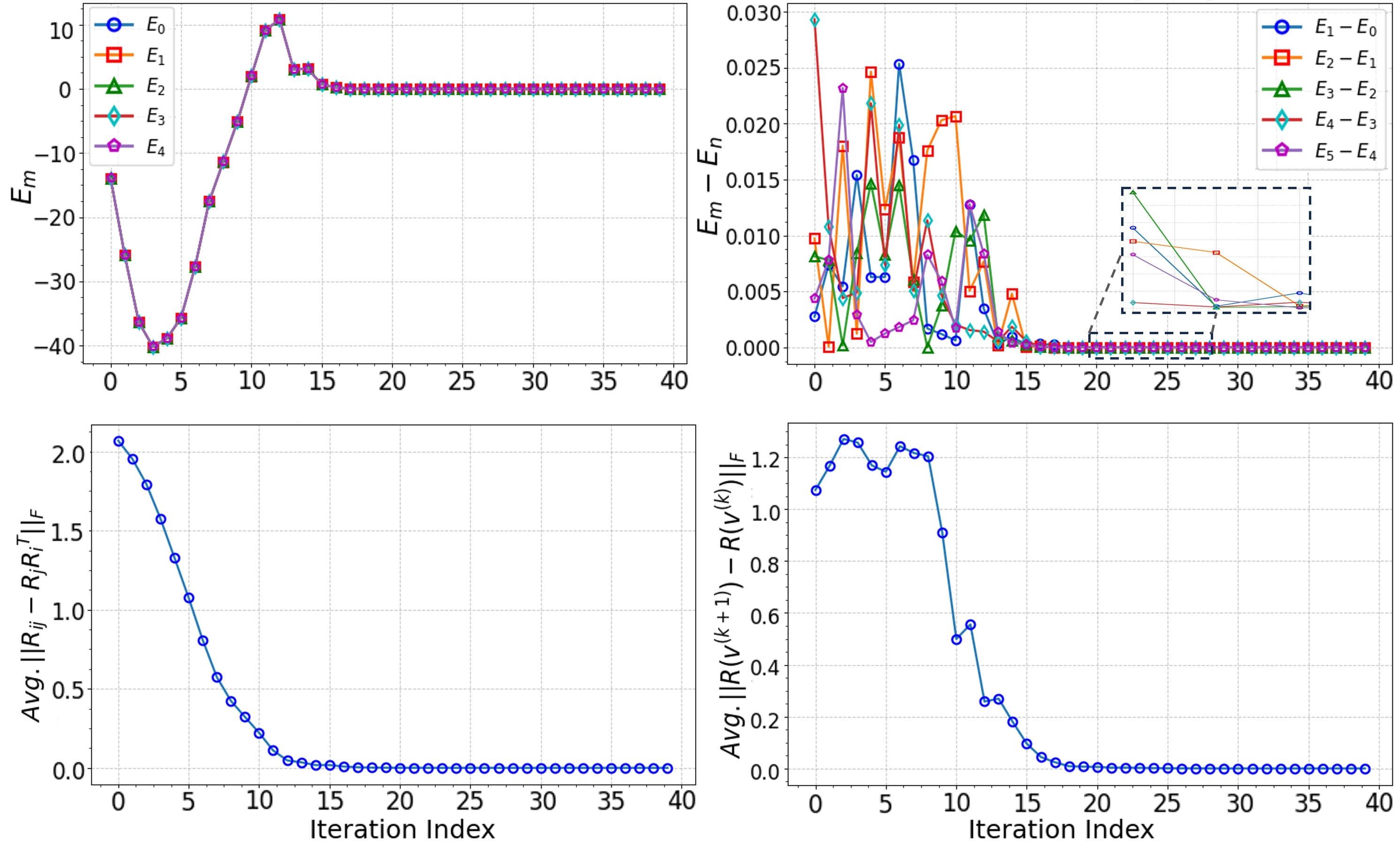} 
    \caption{IQARS convergence behavior for synchronizing $N{=}10$ synthetic random noiseless rotations.} 
    \vspace{-5pt}
    \label{convergence}
\end{figure}
The method assigns Boltzmann weights $w_i$:
\begin{equation} \label{Boltzmann}
    w_i = \frac{e^{-\beta E(\mathbf{q}_i)}}{\mathcal{Z}}, \quad \mathcal{Z} = \sum_{j=1}^K e^{-\beta E(\mathbf{q}_j)},
\end{equation}
to each of the $K$-lowest energy solutions $\{\mathbf{q}_i\}_{i=1}^K$, where $\mathcal{Z} = \sum_{j=1}^K e^{-\beta E(\mathbf{q}_j)}$ serves as the partition function and $E(\mathbf{q}_i)$ denotes the solution energy.
$\beta$ represents an effective inverse temperature parameter.
This creates a noise-robust aggregation scheme that naturally prioritizes lower-energy configurations.
%
Details of the posterior refinement protocol are formally described in App.~\ref{posterior_protocol}. 
\textit{While this protocol is evaluated specifically for MRA, we emphasize that it establishes a general methodology for enhancing solutions obtained from quantum annealing. The approach's theoretical foundations in statistical mechanics and its demonstrated efficacy suggest significant potential for application to a range of other problems, which remain to be explored.}

\section{Experiments} 
\label{sec:experiments}

\noindent We empirically evaluate IQARS's accuracy and robustness using synthetic and real-world datasets~\cite{strecha2008benchmarking}.
The QUBO parametrization---comprising coupling matrices $\tilde{Q}$ and linear bias term $\bm{\tilde{c}}$---is prepared on an Intel Xeon CPU @ 2.20 GHz with 12 GB RAM.
Prepared QUBO formulations follow by execution on D-Wave \textit{Advantage Series} 6.4 annealers operating at a cryogenic base temperature of \( 15 \) \textit{mK}.\
All experimental configurations remain consistent unless explicitly mentioned otherwise.
The annealing protocol maintains the default duration of 20 \textit{µs} apart from data transmission overheads.
Each experimental trial is conducted with $10^2$ reads per iteration.
The search space is configured with an initial value of $\delta^{0} = \pi/30$  discretized uniformly with $m=3$ qubits, yielding $2^3=8$ equidistant sub-intervals---a coarse discretization scheme explicitly chosen to accommodate a larger problem scale $N$ with constrained quantum computational budget; see details in Sec.~\ref{scalability}.

\noindent \textbf{Noise-free Synthetic Rotation Graphs}. We construct a fully-connected synthetic rotation graph of $N$ nodes.
Each node \( i \) is initialized with a random tangent vector \( \bm{v_i} \in \mathbb{R}^3 \) uniformly sampled from the Lie algebra \(\mathfrak{so}(3)\) and later mapped to its corresponding rotation matrix \( R(\bm{v_i}) \) via the exponential mapping defined in Eq.~\eqref{eq:exponentialmap}.
For each node pair \( \{i, j\} \), we compute the relative rotation, constructing a dense symmetric matrix \( \tilde{R}_{ij} \in R^{N \times N} \) that encapsulates the relative rotations within the camera graph. 
Leveraging the constructed dense relative rotation matrix, we implement Alg.~\ref{IterativeQRA} to recover synchronized absolute rotations \( R_i \) while systematically monitoring the convergence dynamics through quantitative metrics including evolution of energy spectrum $E_m$, neighboring energy gaps $E_m - E_n$, and rotation residuals throughout iterations, as visualized in Fig.~\ref{convergence}.
The observed monotonic decay of rotation residuals along with the absence of energy gap closures provides empirical validation for the theoretical convergence guarantees established in Theorem~\ref{thm:adiabatic}.
In addition to the visualized convergence dynamics, the performance of synchronizing noise-free synthetic rotations of different scales with IQARS is recorded in Tab.~\ref{noiseless}. 
Besides IQARS---which strictly maintains the optimization path within the SO(3) manifold through iterative exponential mapping; see Sec.~\ref{SO(3)}---we also evaluate an alternative direct approach proposed in~\cite{golyanik2020quantum} that can be combined with Prop.~\ref{proposition1} for MRA.
This approach, however, lacks formal SO(3) constraint guarantees; see App.~\ref{ours:direct} for implementation details.
We denote IQARS as ``ours (iterative)'', and the alternative approach as ``ours (direct)'', with the experimental results summarized in Tab.~\ref{noiseless} to enable a principled comparison of different approaches.
\begin{table}[t]
\centering
\scalebox{0.78}{
\begin{tabular}{c|c|c|c}
\toprule
\textbf{N} & \textbf{Solvers} & \textit{Avg}. $ || R_{ij} - R_j {R_i}^T || $ \textcolor{green}{$\downarrow$} & \textit{Avg}. $|| \theta^* - \theta ||$ \textcolor{green}{$\downarrow$} \\ \midrule
\multirow{2}{*}{10} 
& Ours (iterative)        & \{\textbf{1.484} $\pm$ \textbf{0.08}\} \textbf{e-17}   & \{\textbf{0.933} $\pm$ \textbf{0.05} \} \textbf{e-17}    \\ 
& Ours (direct)  & 0.863 $\pm$ 0.07          & 0.826 $\pm$ 0.08           \\ \midrule
\multirow{2}{*}{15}  
& Ours (iterative)        & \{\textbf{1.156} $\pm$ \textbf{0.24}\} \textbf{e-17}   & \{\textbf{7.843} $\pm$ \textbf{0.87}\} \textbf{e-18}    \\ 
& Ours (direct)  & 1.226 $\pm$ 0.18           & 0.975 $\pm$ 0.17           \\ \midrule
\multirow{2}{*}{20} 
& Ours (iterative)        & \{\textbf{9.342} $\pm$ \textbf{0.86}\} \textbf{e-17}   & \{\textbf{6.685} $\pm$ \textbf{0.69}\} \textbf{e-17}    \\ 
& Ours (direct)  & 0.892 $\pm$ 0.065          & 0.793 $\pm$ 0.059           \\ \bottomrule
\end{tabular}
}
\caption{Benchmark results of IQARS for MRA on synthetic noise-free rotations.}
\label{noiseless}
\vspace{-5pt}
\end{table}
We show that, compared to ~\cite{golyanik2020quantum}, our design is empirically valid and achieves promising performance on D-Wave's annealing machines.

\noindent \textbf{Hyperparameter Study}. We perform a rigorous sensitivity analysis of hyperparameters in Alg.~\ref{IterativeQRA} on synthetic datasets via controlled experiments.
Specifically, we evaluate: (i) annealing threshold $\kappa$, (ii) annealing rate $\tau$, (iii) qubit count $m$ for discretization, and (iv) penalty coefficient $\alpha$.
All hyperparameters remained fixed post-optimization for controlled studies.
Our experimental results as presented in Fig.~\ref{ablation} (log scale) demonstrate that increasing $m$ at the cost of more quantum resource budget leads to monotonic improvements in convergence properties due to increased search interval representation fidelity; see Sec.~\ref{scalability}.
Reducing the annealing rate $\tau$ results in more stable optimization trajectories.
As the problem scales up, the convergence behavior tends to get progressively slower under otherwise identical experimental conditions.
Notably, the penalty factor $\alpha$ exhibits a Goldilocks effect—both excessive and insufficient penalization degrade performance, leaving the optimal $\alpha$ worth exploring.
This empirical observation is consistent with our theoretical analysis earlier in Sec.~\ref{SO(3)}.
%
These findings collectively establish principled guidelines for parameter selection in leveraging IQARS.

\noindent \textbf{Noise-Corrupted Synthetic Rotation Graphs}. Practically, observed real-world rotation measurements are inherently noisy.
We model this scenario and generate perturbed measurements $\tilde{R}_{ij}$ by applying controlled multiplicative distortions on the $\mathrm{SO}(3)$ manifold.
Specifically, we simulate such noise corruption by: (i) sampling a random tangent vector $\mathbf{v} \in \mathbb{R}^3$ uniformly from $[0,1]^3$, (ii) scaling it by a prescribed noise level $\sigma$, and (iii) mapping it to $\mathrm{SO}(3)$ via the exponential map $\exp(\sigma \mathbf{v})$.
This noise construction is consistent with previous approaches~\cite{wilson2016rotations,lee2024robustsinglerotationaveraging}, ensuring that the perturbed rotations remain properly constrained to the manifold while preserving the geometric Lie group structure.
\begin{figure}[t]
    \centering 
    \includegraphics[width = 0.48\textwidth]{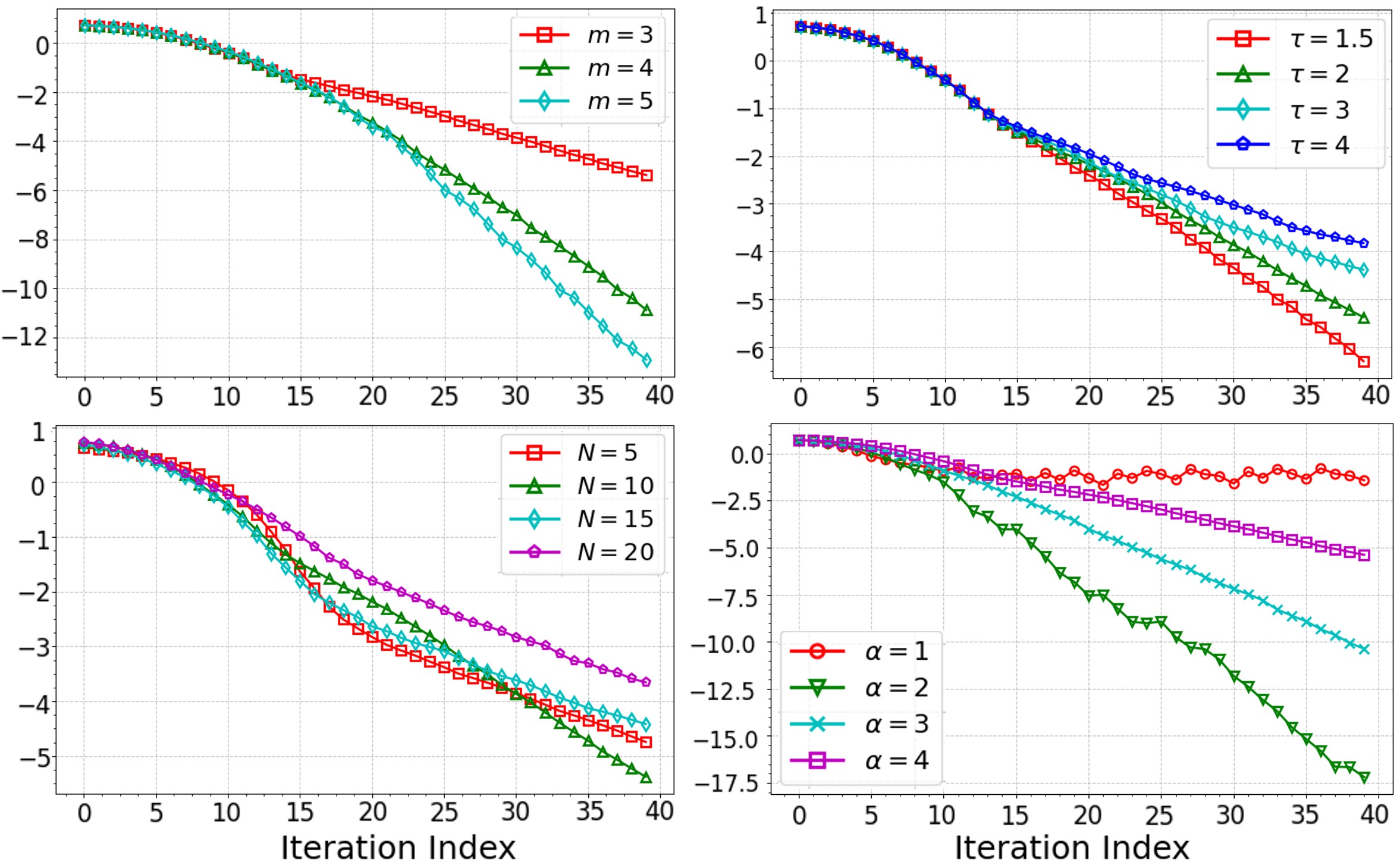} 
    \caption{Visualization of IQARS's hyperparameter choice on the convergence behavior (MRA residual on log scale on y-axis).}
    \label{ablation}
    \vspace{-5pt}
\end{figure}
We evaluate the performance of IQARS and provide additional experiments with established noise-robust classical solvers,
specifically the Riemannnian optimization framework of Shonan~\cite{dellaert2020shonan} and the robust L1-IRLS algorithm~\cite{chatterjee2013efficient}.
L1-IRLS leverages minimum spanning tree~(MST) initialization~\cite{chatterjee2013efficient,hartley2013rotation}, while Shonan employs random SO($p$) initialization; $p \ge 1$~\cite{dellaert2020shonan}.
We visualize in Fig.~\ref{noise} the performance of different MRA solvers on synchronizing $N=20$ rotations across increasing noise ratios $\sigma$.
\textit{While primarily designed to pioneer a novel quantum solver for MRA rather than challenging classical MRA solvers especially with available hardware of limited resource budget resulting in small problem sizes and coarse search space discretization}, empirical results highlight that IQARS on D-Wave machines can achieve residual reduction by approximately 12\% compared to the second best---Shonan, across a wide range of noise regimes, showing great potentials of our approach in future.
For completeness, we also: i) benchmark IQARS against two prominent local optimization methods---trust-region and Levenberg-Marquardt~(LM)---both theoretically and empirically; see details in App.~\ref{s:local_solvers} and ii) evaluate IQARS (identity initialization by default to avoid heuristic bias) with other initializations; see App.~\ref{s:init}.

\begin{figure}[t]
    \centering 
    \includegraphics[width = 0.48\textwidth]{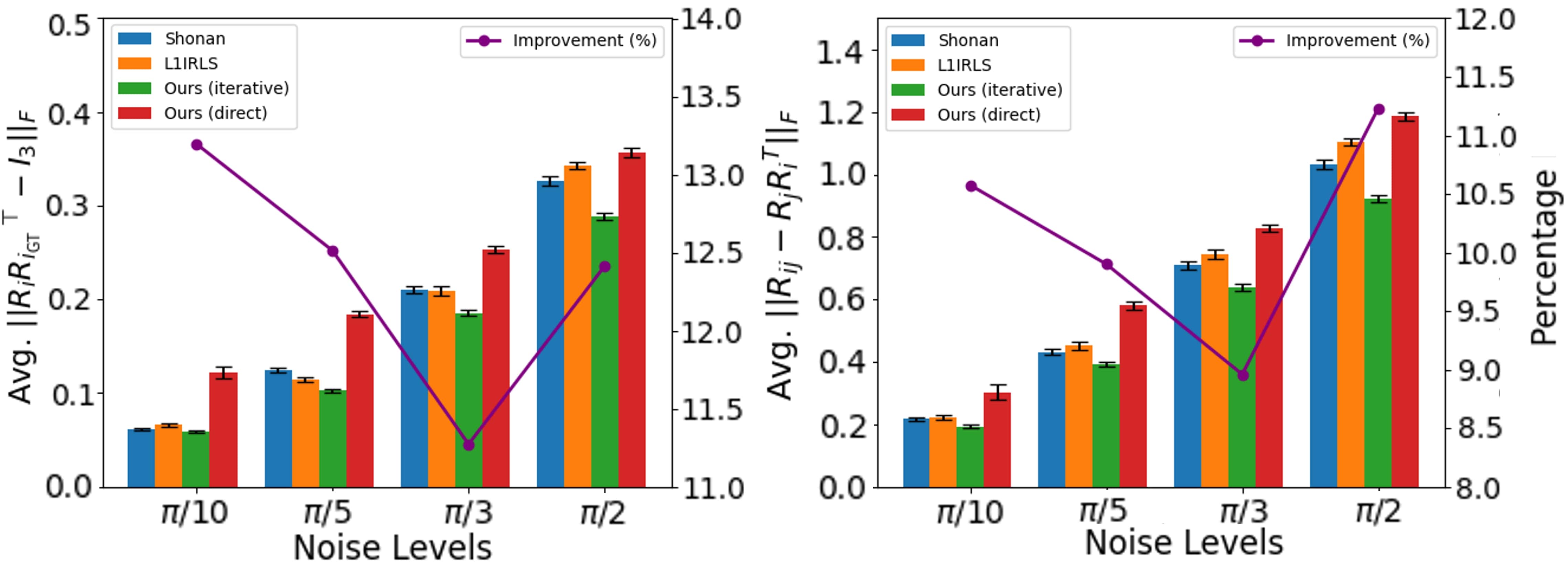} 
    \caption{Benchmark results of IQARS against other solvers for MRA on a synthetic noisy dataset across noise levels $\sigma$; ${R_{i}}_{GT}$ represents the ground truth of $R_i$.} 
    \label{noise}
    \vspace{-5pt}
\end{figure}

\noindent \textbf{Real-World Noisy Dataset Evaluation}. To validate the practical utility of IQARS beyond synthetic experiments, we evaluate its performance alongside Shonan and L1-IRLS on established real-world benchmark datasets~\cite{strecha2008benchmarking}.
Results are documented in Tab.~\ref{results}, demonstrating that IQARS maintains robustness and has performance on par with or outperforms other classical solvers on real-world datasets.
This observation is consistent with our previous experiments on synthetic datasets.
The chain break ratio, as a proxy for IQARS embedding quality within the quantum hardware topology, is also monitored and reported in Tab.~\ref{results}.
\begin{table}[htbp]
\vspace{-5pt}
\centering
\scalebox{0.57}{
\begin{tabular}{c|c|c|c|c|c|c}
\toprule
N & Dataset & Evaluation Metrics & Ours (iterative) & Shonan & L1-IRLS & Ours (direct) \\ \midrule
\multirow{3}{*}{11} & \multirow{3}{*}{Fountain} & \textit{Avg}. $||R_{ij} - R_j{R_i}^T||$ \textcolor{green}{$\downarrow$} & \textbf{3.72e-3} & 4.18e-3 & 4.39e-3 & 8.17e-2 \\
& & \textit{Avg}. $||\theta^* - \theta||$ \textcolor{green}{$\downarrow$} & \textbf{3.51e-3} & 4.04e-3 & 4.23e-3 & 6.73e-2 \\
& & \textit{Avg}. chain break (\%) \textcolor{green}{$\downarrow$} & 6.2e-2 & / & / & 5.81e-2 \\ \midrule
\multirow{3}{*}{15} & \multirow{3}{*}{Castle} & \textit{Avg}. $||R_{ij} - R_j{R_i}^T||$ \textcolor{green}{$\downarrow$} & \textbf{1.12e-3} & 1.33e-3 & 1.65e-3 & 7.65e-2 \\
& & \textit{Avg}. $||\theta^* - \theta||$ \textcolor{green}{$\downarrow$} & \textbf{1.02e-3} & 1.19e-3 & 1.48e-3 & 5.65e-2 \\
& & \textit{Avg}. chain break (\%) \textcolor{green}{$\downarrow$} & 1.13e-1 & / & / & 1.02e-1 \\ \midrule
\multirow{3}{*}{8} & \multirow{3}{*}{Herz-Jesus} & \textit{Avg}. $||R_{ij} - R_j{R_i}^T||$ \textcolor{green}{$\downarrow$} & \textbf{3.26e-3} & 3.89e-3 & 4.06e-3 & 9.86e-2 \\
& & \textit{Avg}. $||\theta^* - \theta||$ \textcolor{green}{$\downarrow$} & \textbf{3.08e-3} & 3.62e-3 & 3.88e-3 & 8.25e-2 \\
& & \textit{Avg}. chain break (\%) \textcolor{green}{$\downarrow$} & 3.8e-2 & / & / & 4.2e-2 \\ \bottomrule
\end{tabular}
}
\caption{Benchmark results of IQARS against other solvers for MRA on real-world datasets~\cite{strecha2008benchmarking}; hardware statistics such as chain break ratios reflecting machine embedding quality are reported.}
\label{results}
\end{table}
As the problem scales up, i.e. larger $N$, the chain break ratio is observed to slightly increase.
While this (at a low level) does not seem to significantly impact the performance, this reflects an inherent limitation of embedding larger problems into existing hardware topology.
Lastly, we also integrated IQARS within the Glomap, a SfM pipeline~\cite{pan2024glomap}; see App.~\ref{SfM_pipeline}, for a qualitative visual analysis of the Poisson-reconstructed meshes. 
The visualization in Fig.~\ref{reconstructed} shows accurate 3D reconstructions, establishing IQARS as a robust MRA method with potentials for applications such as SfM.

\noindent \textbf{Posterior Analysis}. We rigorously evaluate our posterior protocol that aims to relieve the thermal noise effect during annealing and improve the solution quality; see Sec.~\ref{sec:postanal}.
The protocol incorporates low-energy solution spaces of various spans.
We define and maintain inverse temperature $\beta=2$---carefully chosen to empirically balance exploration/exploitation trade-offs---and systematically vary the span $K$ of low-energy solution spaces.
We observe that the proposed posterior protocol consistently improves solution quality and yields statistically significant gains over the unrefined annealer outputs.
Detailed statistical results with different variable spans $K$ of the solution space are recorded and visualized in Fig.~\ref{posterior}.
The visualization additionally shows a characteristic saturation curve: with initial increases in $K$ yielding substantially improved performance, the marginal improvement diminishes asymptotically beyond an optimal value, i.e. a sweet spot, for low-energy sample space selection.
This sweet spot for the MRA is empirically observed to be around $K = 30$, suggesting optimal sample space selection.

\begin{figure}[t]
    \centering 
    \includegraphics[width = 0.48\textwidth]{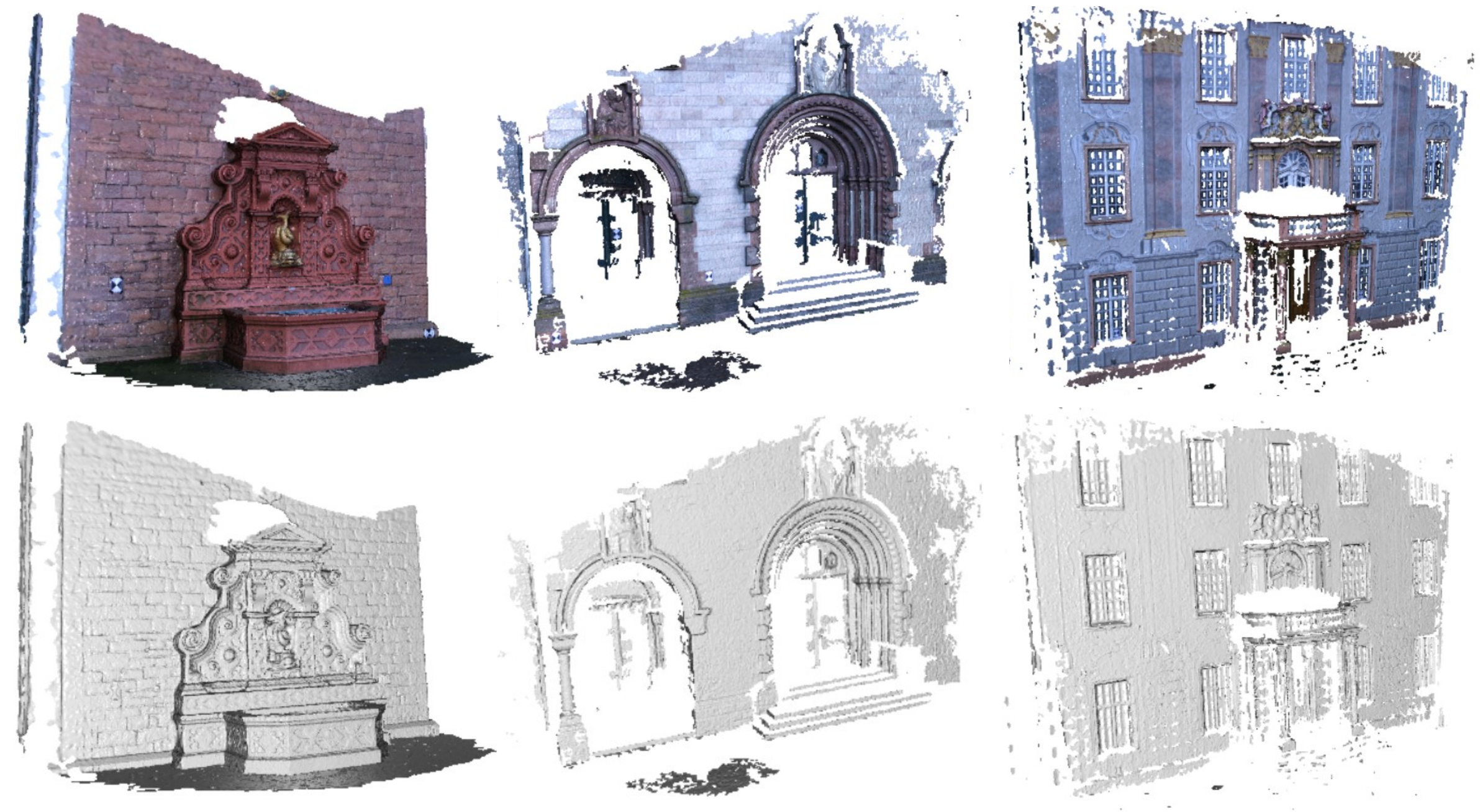} 
    \caption{Reconstructed 3D geometries from multi-view imagery via Glomap~\cite{pan2024glomap} integrated with IQARS.
    Both color-textured (top) and geometry-only (bottom) representations of three benchmark datasets—\textit{Fountain}, \textit{Castle}, and \textit{Herz-Jesus} (left-to-right) are visualized.
    Quantitative performance metrics are detailed in Tab.~\ref{results}.
    } 
    \label{reconstructed}
    \vspace{-15pt}
\end{figure}

\noindent \textbf{Embedding on Contemporary Annealers}. Alongside theoretical scalability analysis in Sec.~\ref{scalability} that characterize \textit{logical qubit} requirements of IQARS, we evaluate its \textit{physical resource} demands by considering problem embeddings on the hardware topology of D-Wave Advantage System (Pegasus architecture).
Specifically, we measure the physical resource consumption for MRA across varying problem sizes $N$ and visualized it in Fig.~\ref{q_resources} in App.; it also includes other machine versions.  
As an example, for synchronizing $N = 15$ rotations requiring $135 = 3Nm$ logical qubits, approximately 2500 physical qubits are required to embed the logical problem on the hardware topology of the Advantage 6.4 annealer.
This accounts for nearly half of the processor's total capacity (5000 physical qubits).
\textit{While IQARS is designed to be resource-efficient, its practical implementation for real-world tasks on contemporary annealer hardware can present challenges that constrain its performance.}

\begin{figure}[t]
    \centering 
    \includegraphics[width = 0.48\textwidth]{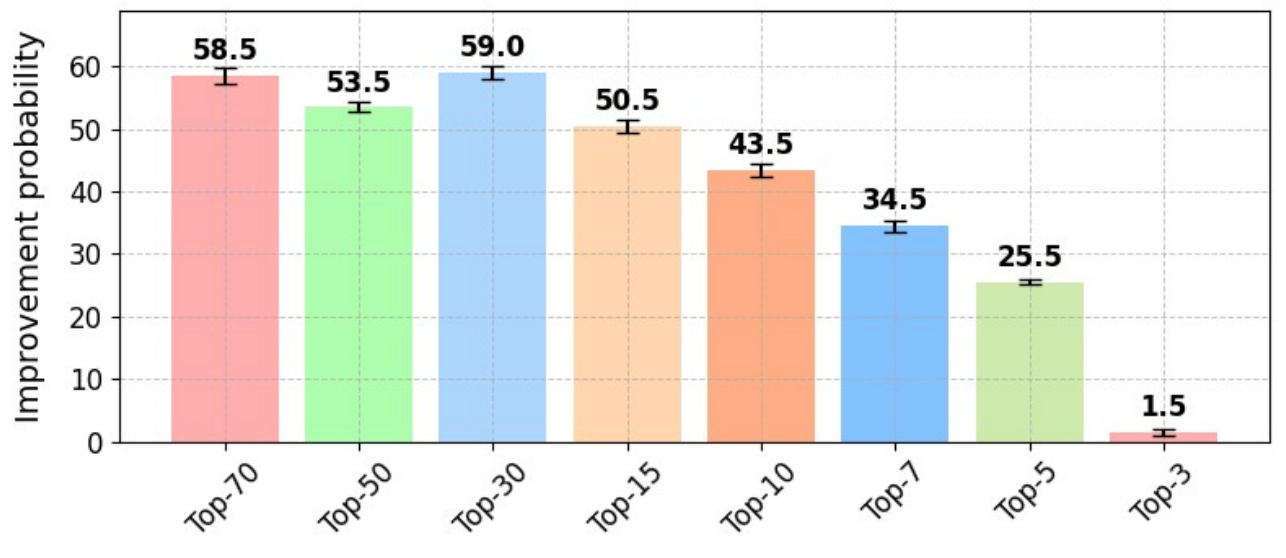} 
    \caption{Statistical improvements due to our refinement protocol with variable span $K$.
    Top-$K$ refers to posterior aggregation performed over the $K$ lowest-energy annealer samples.} 
    \vspace{-5pt}
    \label{posterior}
\end{figure}

\section{Discussion and Conclusion} \label{D_and_C} 

\noindent We introduced IQARS, the first framework for MRA leveraging quantum hardware.
While not positioned as a replacement for classical approaches in the near future, IQARS establishes a novel computational paradigm exploration for MRA.
Despite a limited quantum resource budget resulting in constrained performance, extensive experiments demonstrate that: i) IQARS can recover ground-truth solutions for MRA with a noise-free synthetic dataset; ii) for noisy datasets (synthetic and real-world), a consistent $10$--$15\%$ residual reduction is observed for IQARS compared to Shonan.
With more resources accessible leading to e.g. finer search discretization, the performance of IQARS is expected to improve significantly as per our hyperparameter study.
Our posterior refinement protocol statistically improves sampled solution quality, achieving up to a 60\% probability of producing lower-residual solutions compared to the unrefined annealer outputs.
We aim to establish foundations for advances in 3D vision that build upon ours in Quantum-enhanced Computer Vision (QeCV), and will release source codes for reproducibility.

\noindent \textbf{Limitations}. While IQARS, as a resource-efficient quantum MRA algorithm, demonstrates considerable promise, the experimental scale and its potential for large-scale real-world problems remain constrained with limited resource budget from contemporary annealer hardware.

\noindent \textbf{Future Work}. IQARS holds significant potential for diverse downstream applications.
Beyond SfM, promising tasks include robotic navigation, medical imaging and augmented reality, among others.
Further investigation into alternative MRA formulations with different hardware support presents a compelling avenue for future research.

%% file: sec/suppl.tex
\clearpage
\appendix
\maketitlesupplementary

\setlength{\intextsep}{5pt}
\setlength{\columnsep}{5pt}

\noindent This appendix supplements the main paper with expanded content, starting with a detailed background on MRA in Sec.~\ref{sec:preliminaries}.
Explanation of the Hamiltonian evolution during quantum annealing is provided in Sec.~\ref{evolution_annealing}.
Sec.~\ref{AQC} explains the adiabatic theorem.
Sec.~\ref{proof_p1} and \ref{proof_p3} provide details of the propositions in the main text.
The effect of penalization is explained in Sec.~\ref{penalization_effect}.
Sec.~\ref{QUBO} details the transformation of the QUBO problem into the Ising form, a Hamiltonian representation more naturally aligned with the physical interactions implemented in quantum annealers.
In Sec.~\ref{ours:direct}, we explain how Prop.~\ref{proposition1} can be combined with \cite{golyanik2020quantum}, leading to an alternative direct MRA approach.
Sec.~\ref{posterior_protocol} details the posterior protocol for improving sample quality.
We introduce details of integrating IQARS, as a module for averaging multiple rotations, in an SfM pipeline; see Sec.~\ref{SfM_pipeline}.
%
%
In addition to synchronizing fully-connected camera rotations, Sec.~\ref{sparse_graph} provides evaluations on sparsely-connected camera rotations.
%
%
Visualizations of the logical coupling matrix and its embeddings on the hardware are provided in Sec.~\ref{QUBO_sparsity} and \ref{embedding_analysis}.
An additional comparative analysis between IQARS and a few other prominent local solvers is provided in Sec.~\ref{s:local_solvers}.
Evaluations of IQARS with different initializations are provided in Sec.~\ref{s:init}.

\section{Review: MRA}
\label{sec:preliminaries}

\noindent This section reviews the basic definition of MRA. 
Let $\left\{ R_1, \cdots,  R_N\right\} \subset \text{SO} (3)$ represent a set of absolute $3$D rotational matrices w.r.t.~a global reference frame. 
The special orthogonal group $\text{SO} (3) $ is the set of matrix elements $R$ that are orthogonal and have a unit determinant: 
\begin{equation}
    \text{SO} (3) := \left\{ R \in \mathbb{R}^{3 \times 3}, \ \  R  R ^T =  I , \ \mathrm{det}( R) = 1\right\}.
\end{equation}
Given the set of absolute rotations $\left\{ R_1, \cdots,  R_N\right\}$, the relative rotation \(R_{ij}\) between any two cameras \(i\) and \(j\), where \(i, j \in \{1, \cdots, N\}\), can be computed as 
\begin{equation}
    R_{ij} = R_j R_i^{-1} \; \text{or, equivalently}, \; R_{ij} R_i = R_j.
\end{equation} 
Assume we are given a set of noisy relative rotations $\{ \tilde{R}_{ij} \mid i, j = 1, \dots, N\}$ between different input camera pairs $i, j$, the objective of MRA is to find the absolute rotations $\{ R_i \mid i = 1, \dots, N\}$ that best explains the observed $\tilde{R}_{ij}$. 
%
MRA is an inverse optimization problem 
to minimize the discrepancy between the predicted and observed relative rotations:
\begin{equation}\label{RA_eqn}
    \min_{ R_1, \dots,  R_N \in \text{SO}(3)} \sum_{(i,j)} \operatorname{dist}( \tilde{ R}_{ij}  R_i,  R_j),
\end{equation}
where ``\(\operatorname{dist}(a, b)\)'' is a chosen distance metric that quantifies the difference between two rotations. 
With the widely used choice of squared chordal distance, i.e.,~$\text{dist}(a, b) = \|a - b \|_F ^2$, we can re-formulate Eq.~\eqref{RA_eqn} as
\begin{equation} \label{RA_final}
    \min_{ R_1,  \dots,  R_N \in \text{SO}(3)} \sum_{(i,j)} \| \tilde{ R}_{ij}  R_i -  R_j \|_F^2.
\end{equation}
Eq.~\eqref{RA_final} provides the starting MRA formulation, which we map to a quantum-compatible form in the main text.

\section{Review: Quantum System Evolution during Annealing} \label{evolution_annealing}

\noindent The evolution of quantum states during the Hamiltonian transition in the annealer is governed by Schrödinger's equation
\begin{equation}
    i \hbar \frac{\partial}{\partial t} \ket{\psi(t)} = H(t) \ket{\psi(t)},
\end{equation}
where \( i \) is the imaginary unit, \( \hbar \) denotes the reduced Planck constant, \( \ket{\psi(t)} \) represents the quantum state at time \( t \), and \( H(t) \) is the time-dependent Hamiltonian.
Its analytical solution can be expressed as
\begin{equation}
    \ket{\psi(t)} = \hat{T} \text{exp}\big({-\frac{i}{\hbar} \int_{0}^{t} H(t')dt' }\big) \ket{\psi(0)},
\end{equation}
where $\hat{T}$ is the time ordering operator, ensuring chronological sequence in the exponential expansion. By substituting in the time-dependent Hamiltonian, see Eq.~\ref{eq:time_dependent_hamiltonian}, we can, therefore, describe the evolution of quantum states during the annealing process as follows:
\begin{multline}
    \ket{\psi(t)} = \hat{T} \text{exp} \left( -\frac{i}{\hbar} \int_{0}^{s} \left[ -A(s') \sum_{i} \sigma_i^x \right. \right. \\
    \left. \left. + B(s') \left( \sum_{i,j} J_{ij} \sigma_i^z \sigma_j^z + \sum_i h_i \sigma_i^z \right) \right] ds' \right) \ket{\psi(0)}.
\end{multline}
\( A(s') \) and \( B(s') \) are annealing schedules.
\( \sigma_i^x \) and \( \sigma_i^z \) are Pauli spin operators.
\( J_{ij} \) denotes coupling strengths, and \( h_i \) represents local magnetic fields.
These collectively define the system's quantum dynamics.

\section{Review: Adiabatic Theorem} \label{AQC}

\noindent Adiabatic quantum computation~(AQC) represents a specialized framework within quantum computing that leverages the adiabatic theorem~\cite{albash2018adiabatic,jansen2007bounds,kadowaki1998quantum,born1928beweis}. Theorem~\ref{thm:adiabatic} formalizes the key ideas of this principle:

\begin{theorem} \label{thm:adiabatic}
Adiabatic Theorem: A physical system remains in its instantaneous eigenstate if a given perturbation is acting on it slowly enough and if there is a gap between the initial eigenvalue and the rest of the Hamiltonian's spectrum.
\end{theorem}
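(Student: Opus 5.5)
To make the informal statement of Theorem~\ref{thm:adiabatic} precise, I would consider a smooth family $H(s)$, $s=t/T\in[0,1]$, as in Eq.~\eqref{eq:time_dependent_hamiltonian}. I would pick an instantaneous eigenvector $\ket{\phi_0(s)}$ with eigenvalue $E_0(s)$, separated from the rest of the spectrum by a gap $g(s)\geq g_{\min}>0$ for all $s$. The claim to prove becomes: the solution of the Schrödinger equation in Sec.~\ref{evolution_annealing} with $\ket{\psi(0)}=\ket{\phi_0(0)}$ satisfies
\[
\bigl\|\ket{\psi(T)}-e^{i\gamma(T)}\ket{\phi_0(1)}\bigr\| \le \frac{C}{T}\Bigl(\frac{\|\dot H(0)\|}{g(0)^2}+\frac{\|\dot H(1)\|}{g(1)^2}+\int_0^1\Bigl(\frac{\|\ddot H\|}{g^2}+\frac{\|\dot H\|^2}{g^3}\Bigr)ds\Bigr),
\]
for some constant $C$ and phase $\gamma(T)$, and this bound tends to zero as $T\to\infty$. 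I would follow the standard route of Kato and Jansen--Ruskai--Seiler \cite{jansen2007bounds}.

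First, let $P(s)$ be the spectral projector onto the isolated eigenvalue. By the gap assumption it is given by the Riesz integral $P(s)=\frac{1}{2\pi i}\oint_\Gamma (z-H(s))^{-1}dz$ over a contour $\Gamma$ at distance $\sim g/2$ from the spectrum. Differentiating under the integral shows $P$ is $C^2$, with $\|\dot P\|\lesssim \|\dot H\|/g$ and an analogous bound for $\ddot P$. Second, I would introduce Kato's adiabatic generator $H_A=H+\frac{i}{T}[\dot P,P]$. Its propagator $U_A$ intertwines exactly, $U_A(s)P(0)=P(s)U_A(s)$, which one checks by differentiating $U_A^\dagger P U_A$ and using $P\dot P P=0$. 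So the "adiabatic" evolution keeps the state in the instantaneous eigenspace, up to a phase.

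Third, I would compare the true propagator $U_T$ with $U_A$ through the Duhamel identity
\[
U_T(1)-U_A(1)=\frac{-i}{T}\cdot T\int_0^1 U_T(1)U_T(s)^{\dagger}\,\frac{i}{T}[\dot P,P]\,U_A(s)\,ds ,
\]
whose integrand is only $O(1/T)\cdot T$, i.e.\ $O(1)$. This is the main obstacle: the naive estimate gives no decay, and the oscillations have to be extracted. To do so, I would write $[\dot P,P]=[H,X]$ with $X=\frac{1}{2\pi i}\oint_\Gamma R(z)\dot P R(z)\,dz$, where $R(z)=(z-H)^{-1}$. This works because $[\dot P,P]$ is off-diagonal with respect to $P$, and the reduced-resolvent construction then gives $\|X\|\lesssim\|\dot H\|/g^2$. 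Since $H$ generates the fast phase, $[H,X]$ becomes a total $s$-derivative along the evolution (up to $O(1/T)$ corrections). Integrating by parts then yields boundary terms $\|X(0)\|$ and $\|X(1)\|$, plus an integral of $\|\dot X\|$ together with the $O(1/T)$ correction term, all carrying an explicit factor $1/T$. Bounding $\dot X$ via the derivative bounds on the resolvent and on $P$ from the first step produces exactly the $\|\ddot H\|/g^2$ and $\|\dot H\|^2/g^3$ terms.

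Finally, since $U_A(1)\ket{\phi_0(0)}$ lies in $\mathrm{Ran}\,P(1)$, which is one-dimensional, it equals $e^{i\gamma}\ket{\phi_0(1)}$, and the displayed estimate follows. Hence the system remains in its instantaneous eigenstate as $T\to\infty$ ("slow enough") whenever $g_{\min}>0$ ("a gap"), which is Theorem~\ref{thm:adiabatic}.
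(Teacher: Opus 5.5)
The paper gives no proof of Theorem~\ref{thm:adiabatic}; it states it informally as background and defers to the literature, including \cite{jansen2007bounds}. Your sketch is the Kato/Jansen--Ruskai--Seiler argument from that reference, so it matches what the paper relies on and is correct in substance.

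Your formalization also tightens the paper's wording. A gap at the initial eigenvalue alone is not enough, and you rightly require $g(s)\ge g_{\min}>0$ along the whole path. You should also assume explicitly that $E_0(s)$ is nondegenerate. Your last step uses $\dim\operatorname{Ran}P(1)=1$; without it you only get that the state stays in the spectral subspace.

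There are two algebraic slips to fix.

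\emph{Duhamel prefactor.} From $i\partial_s U_T=THU_T$ and $i\partial_s U_A=TH_AU_A$ one gets $\partial_s(U_T^\dagger U_A)=U_T^\dagger[\dot P,P]U_A$, hence
\[
U_T(1)-U_A(1)=-\int_0^1 U_T(1)U_T(s)^\dagger\,[\dot P,P]\,U_A(s)\,ds .
\]
So the prefactor in front of $\frac{i}{T}[\dot P,P]$ is $iT$, not $\frac{-i}{T}\cdot T$. As you displayed it, the right-hand side would already be $O(1/T)$. That contradicts your own correct remark that the naive estimate gives no decay.

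\emph{Sign of the commutator.} With $R(z)=(z-H)^{-1}$ one has $[H,RAR]=[R,A]$. Your $X=\frac{1}{2\pi i}\oint_\Gamma R\dot PR\,dz$ therefore satisfies $[H,X]=[P,\dot P]=-[\dot P,P]$. The sign does not affect any norm estimate.

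With these corrections, use
\[
\partial_s(U_T^\dagger XU_A)=U_T^\dagger\bigl(iT[H,X]+\dot X+X[\dot P,P]\bigr)U_A .
\]
Integrating by parts gives the boundary terms $\|X(0)\|,\|X(1)\|$ and the integrals of $\|\dot X\|$ and $\|X\|\,\|\dot P\|$, each with a factor $1/T$. These yield exactly your displayed $\|\dot H\|/g^2$, $\|\ddot H\|/g^2$ and $\|\dot H\|^2/g^3$ bound.
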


\section{Proof of Proposition 1} \label{proof_p1}

\noindent We provide proof of the Prop.~\ref{proposition1} given in the main text: 
\vspace{-4pt}
\begin{equation}
    \begin{aligned}
        &\min_{R_1, \dots, R_N} \sum_{(i,j)} \| \tilde{R}_{ij} R_i - R_j \|_F^2 \\ = &\min_{R_1, \dots, R_N} \sum_{(i,j)} \| \tilde{R}_{ij} \|_F^2 \| R_{i} \|_F^2 + \| R_{j} \|_F^2 - 2 \langle R_{ij} R_{i}, {R}_{j}  \rangle _F \\
        = &\min_{{R}_1, \dots, {R}_N} \sum_{(i,j)} 3 \| {R}_{i} \|_F^2 + \| {R}_{j} \|_F^2 - 2 \mathrm{Tr}( {R}_{i} ^\top {R}_{ij} ^\top {R}_{j}  \rangle \\
        = &\min_{{R}_1, \dots, {R}_N} \sum_{(i,j)} 3 \| {R}_{i} \|_F^2 + \| {R}_{j} \|_F^2 - 2 \mathrm{Tr}( {R}_{j} {R}_{i} ^\top {R}_{ij} ^\top \rangle \\
        = &\min_{{R}_1, \dots, {R}_N} \sum_{(i,j)} 3 \| {R}_{i} \|_F^2 + \| {R}_{j} \|_F^2 - \\
        &2 \vect({R}_i) ^\top (I \otimes {R}_{ij} ^\top) \vect( {R}_j).
    \end{aligned}
\end{equation}

Given the orthonormal nature of rotation matrices \( {R}_i \in \SO(3) \), they satisfy the fundamental properties \( {R}_i^\top = {R}_i^{-1} \) and \( \det({R}_i) = 1 \). These constraints imply that the Frobenius norm \( \|{R}_i\|_F \) is always \( \sqrt{3} \) for any rotation matrix and can be eliminated from the optimization objective. 
We reformulate the quadratic alignment term \( \sum_{(i,j)} -2 \vect({R}_i)^\top (I \otimes {R}_{ij}^\top) \vect({R}_j) \) in compact matrix form by eliminating the explicit summation through Kronecker product identities and rewrite it as

\begin{equation}
    - 2 \left[\vect({R}_1)^\top \cdots \vect({R}_N)^\top \right] {P} \left[
    \begin{matrix}
    \vect({R}_1) \\
    \vdots \\
    \vect({R}_N)
    \end{matrix}
    \right],
\end{equation}
with $\vect({R}_i)^\top \in \mathbb{R}^{1\times 9}$. The corresponding ${P}$ is 
\begin{equation}
    {P} = \begin{bmatrix}
        I \otimes {R}_{11} ^\top & \cdots & I \otimes {R}_{1N} ^\top \\
        \vdots & \ddots & \vdots \\ I \otimes {R}_{N1} ^\top & \cdots & I \otimes {R}_{NN} ^\top
    \end{bmatrix} \in \mathbb{R}^{9N \times 9N}.
\end{equation}

\section{Penalization Effect} \label{penalization_effect}

\noindent While each iterative update of IQARS preserves ${R}_i \in \SO(3)$ through exponential map parameterization, the local search window—defined by first-order linearization of the tangent space—permits bounded exploration of points outside $\SO(3)$. 
Within this region, solutions with larger \( \|{R}_i\|_F \) are favored due to the negative coupling term (see Eq.~\ref{Q_cal}) in the objective function.
This exacerbates the tendency of solutions to drift away from the $\SO(3)$ manifold constraints.
We employ \( \|{R}_i\|_F^2 \) as a principled metric for quantifying geometric deviation from \(\text{SO}(3)\) during local search, with an appropriately chosen regularization strength \(\lambda\) counteracting this tendency within the bounded exploration region.

\section{Proof of Proporsition 3} \label{proof_p3}

\noindent We provide proof of the Prop.~\ref{proporsition3} as given in the main text: 
\begin{equation}
    \begin{aligned}
    & \argmin_{\|\Delta \bm{v}\|_\infty \leq \delta^k} \, \Delta \bm{v}^\top \hat{Q} \Delta \bm{v} + \hat{c}^\top \Delta \bm{v} \\
    = & \argmin_{\bm{q} \in \B^{3Nm}} \left( -\delta^k {1}_{3N} + D \bm{q} \right)^\top \hat{Q} \left( -\delta^k {1}_{3N} + D \bm{q} \right) \\
    & + \hat{c}^\top \left( -\delta^k {1}_{3N} + D \bm{q} \right) \\
    = & \argmin_{\bm{q} \in \B^{3Nm}} \bm{q}^\top D^\top \hat{Q} D \bm{q} + \bm{q}^\top D^\top \hat{Q} \left( -\delta^k {1}_{3N} \right) \\
    & + \left( -\delta^k {1}_{3N} \right)^\top \hat{Q} D \bm{q} + \hat{c}^\top (D \bm{q}) \\
    = & \argmin_{\bm{q} \in \B^{3Nm}} \bm{q}^\top D^\top \hat{Q} D \bm{q} + \bm{q}^\top D^\top \left( \hat{\bm{c}} - 2 \delta^k \hat{Q} {1}_{3N} \right).
    \end{aligned}
\end{equation}

\section{Transformation of QUBO Problems to Ising-Compatible Quadratic Forms} \label{QUBO}

\noindent Given the general QUBO formulation
\begin{equation}
    \min_{\bm x \in \{0, 1\}^n} \bm x^\top Q \bm x + \bm b^\top \bm x,
\end{equation}
we require a transformation to the standard quadratic form for implementation on quantum annealers employing Ising Hamiltonians. Leveraging the Boolean constraint $\bm x \in \{0, 1\}^n$ (which implies $x_i^2 = x_i$), we derive the equivalent pure quadratic form through the following sequence:
\begin{equation}
    \begin{aligned}
        \min_{\bm x \in \{0, 1\}^n} \bm x^\top Q \bm x + \bm b^\top \bm x 
        &= \min_{\bm x \in \{0, 1\}^n} \bm x^\top Q \bm x + \bm x^\top \mathrm{diag}(\bm b) \bm x \\
        &= \min_{\bm x \in \{0, 1\}^n} \bm x^\top \left(Q + \mathrm{diag}(\bm b)\right) \bm x \\
        &\triangleq \min_{\bm x \in \{0, 1\}^n} \bm x^\top Q' \bm x,
    \end{aligned}
\end{equation}
where the transformed matrix $Q' \in \mathbb{R}^{n \times n}$ is constructed as
\begin{equation}
    Q' = Q + \mathrm{diag}(\bm b).
\end{equation}
This exact reformulation: (i) preserves the original optimization problem's solution space, (ii) maintains compatibility with physical Ising model implementations through the identity $x_i = x_i^2$, and (iii) enables efficient embedding on quantum annealers by consolidating all terms into quadratic couplings.

\section{Alternative Approach for MRA} \label{ours:direct}

We also realized that in combination with Prop.~\ref{proposition1}, the approach proposed in \cite{golyanik2020quantum} can be an alternative for MRA.
Specifically, they propose to directly approximate rotation matrices $R_i \in \SO(3)$ through a linear combination of binary-activated basis matrices:
\begin{equation}
\label{eq:approx_vlad}
    R_i \approx \sum_{\ell=0}^{m-1} q_{i,\ell} Q_{\ell}, \quad q_{i,\ell} \in \{0, 1\},
\end{equation}
where the basis matrices $\{Q_{\ell}\}$ quantize the components of the Rodrigues rotation formula; see Eq.~\eqref{eq:exponentialmap}. Specifically, the basis set comprises scaled versions of the identity matrix and generators of $\mathfrak{so}(3)$:
\begin{multline}
\label{eq:basis_vlad}
    Q_{\ell} \in \big\{wC \mid w \in \{0.5, 0.2, 0.1, 0.1, 0.05\}, \\
    C \in \{I, -I\} \cup \{M_k, -M_k\}_{k=1}^6 \big\},
\end{multline}
with the generator matrices defined as
\begin{equation*}
    M_1 = \scalebox{0.8}{$
    \Matrix{
    0 & 0 & 0 \\
    0 & 0 & -1 \\
    0 & 1 & 0
    }
    $},
    M_2= \scalebox{0.8}{$
    \Matrix{
    0 & 0 & 1 \\
    0 & 0 & 0 \\
    -1 & 0 & 0
    }
    $},
    M_3 = \scalebox{0.8}{$
    \Matrix{
    0 & -1 & 0 \\
    1 & 0 & 0 \\
    0 & 0 & 0
    }
    $}
\end{equation*}
\begin{equation}
    M_4 = \scalebox{0.8}{$\Matrix{
    0 & 1 & 0 \\
    1 & 0 & 0 \\
    0 & 0 & 0
    }$},
    M_5 = \scalebox{0.8}{$\Matrix{
    0 & 0 & 1 \\
    0 & 0 & 0 \\
    1 & 0 & 0
    }$},
    M_6 = \scalebox{0.8}{$\Matrix{
    0 & 0 & 0 \\
    0 & 0 & 1 \\
    0 & 1 & 0
    }$}.
\end{equation}
Substituting the approximation from Eq.~\eqref{eq:approx_vlad} into the matricized problem (see Eq.~\eqref{eq:matricization} in Prop.~\ref{proposition1}) yields a QUBO formulation that directly approximates the MRA problem.
However, this approach presents two significant theoretical limitations: (1) the finite basis set $\{Q_{\ell}\}$ imposes fundamental approximation bounds due to its limited expressiveness, and (2) the resulting matrices are not guaranteed to satisfy the orthonormality conditions ($R_i^\top R_i = I$) or the determinant constraint ($\det(R_i) = 1$) required for proper $\text{SO}(3)$ membership, as these nonlinear constraints are not explicitly enforced in the binary formulation.
In our IQARS, such constraints are resolved.

\begin{figure*}[t] 
    \centering 
    \includegraphics[width=1.0\textwidth]{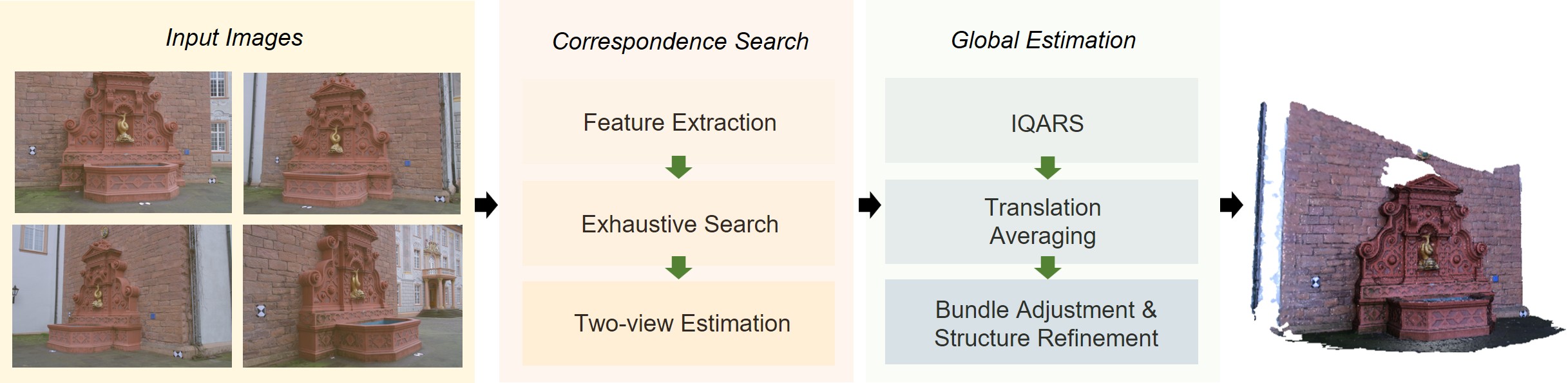} 
    \caption{Integration of our \textit{IQARS} protocol within a traditional SfM pipeline.
    Our IQARS replaces the traditional MRA component while maintaining compatibility with the existing pipeline.}
    \label{Quantum_enhanced_SfM_pipeline} 
\end{figure*}

\section{Posterior Analysis Protocol} \label{posterior_protocol}
\noindent Algorithmic details of performing posterior analysis to refine solution qualities are summarized in Alg.~\ref{Refine_protocol}.

\begin{algorithm}[ht] 
\caption{Posterior Refinement Protocol with Low-energy Binary Solution Space} \label{Refine_protocol}
\begin{algorithmic}[1]
    \State \textbf{Input:} $\textbf{x}$: Low-energy binary strings from annealers
    \Statex \hspace {2.8em} $E(\textbf{x})$: Energies of prepared binary strings
    \Statex \hspace {2.8em} $\beta$: Inverse temperature
    \Statex \hspace {2.8em} $L$: Length of each binary string
    \Statex \hspace {2.8em} $M$: Number of binary strings
    \State Construct an empty array $\textbf{S} = [0, \cdots, 0]$ of size $L$
    \State Construct an empty array $\textbf{m} = [0, \cdots, 0]$ of size $L$
    \State Calibrate energy spectrum to a standardized range
    \State \textbf{For} $i = 1$ to $M$
    \State \hspace{1.5em} Calculate Boltzmann weighting factor $P(\textbf{x}_i)$ according to Eq.~\eqref{Boltzmann}
    \State \hspace{1.5em} \textbf{For} $j = 1$ to $L$
    \State \hspace{2.8em} \textbf{If} $\textbf{x}_i[j] = 1$
    \State \hspace{3.6em} $S[j] \leftarrow S[j]$ + $P(\textbf{x}_i)$
    \State \hspace{2.8em} \textbf{If} $\textbf{x}_i[j] = 0$
    \State \hspace{3.6em} $S[j] \leftarrow S[j]$ - $P(\textbf{x}_i)$
    \State \hspace{1.5em} \textbf{End}
    \State \textbf{End}

    \State \textbf{For} $k = 1$ to $L$
    \State \hspace{1.5em} \textbf{If} $\textbf{S}[j] > 0$
    \State \hspace{2.8em} $\textbf{m}[j]$ = 1
    \State \hspace{1.5em} $\textbf{If}$ $\textbf{S}[j] < 0$
    \State \hspace{2.8em} $\textbf{m}[j]$ = 0
\State \Return $\textbf{m}$, $\textbf{S}$
\end{algorithmic}
\end{algorithm}

\section{Application: IQARS within SfM} \label{SfM_pipeline}

\noindent SfM is a fundamental technique that reconstructs 3D scene geometry and camera poses from a collection of 2D images.
Within this framework, MRA serves as a critical synchronization step that enforces global consistency among relative orientation estimates.
We implement a hybrid quantum-classical pipeline by replacing the conventional MRA module in Glomap~\cite{pan2024glomap}—a state-of-the-art SfM system—with our IQARS.
The complete flow is described in Fig.~\ref{Quantum_enhanced_SfM_pipeline}
This hybrid pipeline maintains compatibility with conventional feature matching and downstream applications.
Quantitative visualization of the reconstructed 3D scenes is provided in Sec.~\ref{sec:experiments}.

\section{MRA for Sparse Problems} \label{sparse_graph}

\begin{wrapfigure}{r}{0.27\textwidth}
    \centering
    \includegraphics[width=0.25\textwidth]{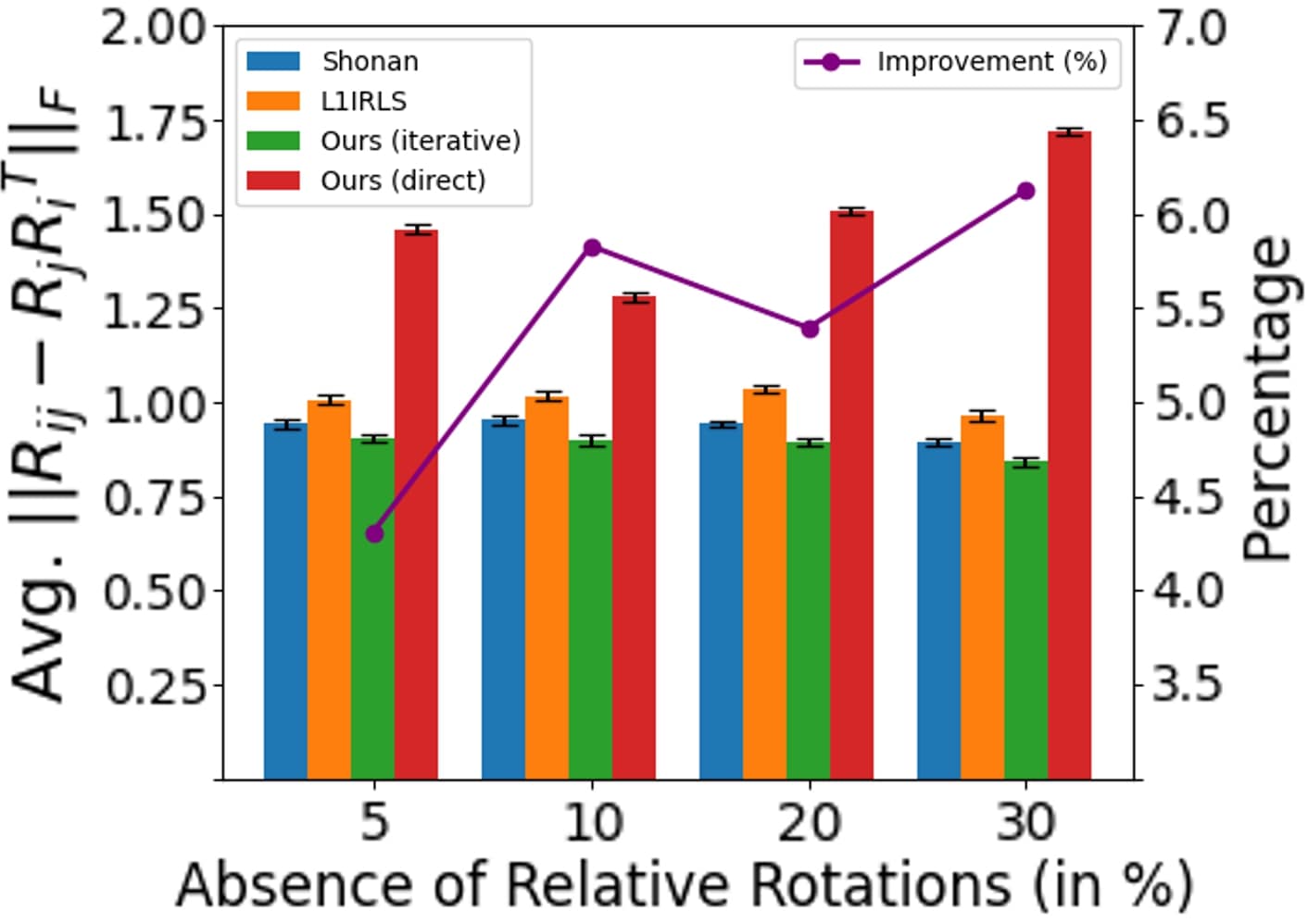} 
    \caption{Benchmark results of IQARS against other solvers for MRA on synthetic dataset with increasing camera graph sparsity.}
    \label{sparsity}
    \vspace{-7pt}
\end{wrapfigure} Building upon our theoretical framework and experimental validation for complete camera graphs $G = (V, E)$ where vertices $v_i \in V$ correspond to absolute camera orientations $R_i \in \mathrm{SO}(3)$, we generalize our analysis to sparse graph topologies characterized by incomplete edge sets $E' \subset E$.
The underlying mathematical formulation maintains consistency, where missing relative rotations $\tilde{R}_{ij}$ are treated as null observations and treated as zero terms during optimization.
To preserve the minimum connectivity requirement, we ensure that each node maintains at least one outgoing edge.
We leverage the same synthetic noisy dataset as in main experiments; see Sec.~\ref{sec:experiments}, and perform experiments by randomly switching off a certain percentage of relative observations.
We progressively increase the sparsity of observations and visualize the results in Fig.~\ref{sparsity}.
Results confirm that our algorithm maintains robustness across different sparsity ratios.

\section{QUBO Coupling Sparsity Visualization} \label{QUBO_sparsity}

\begin{figure}
    \centering 
    \includegraphics[width = 0.48\textwidth]{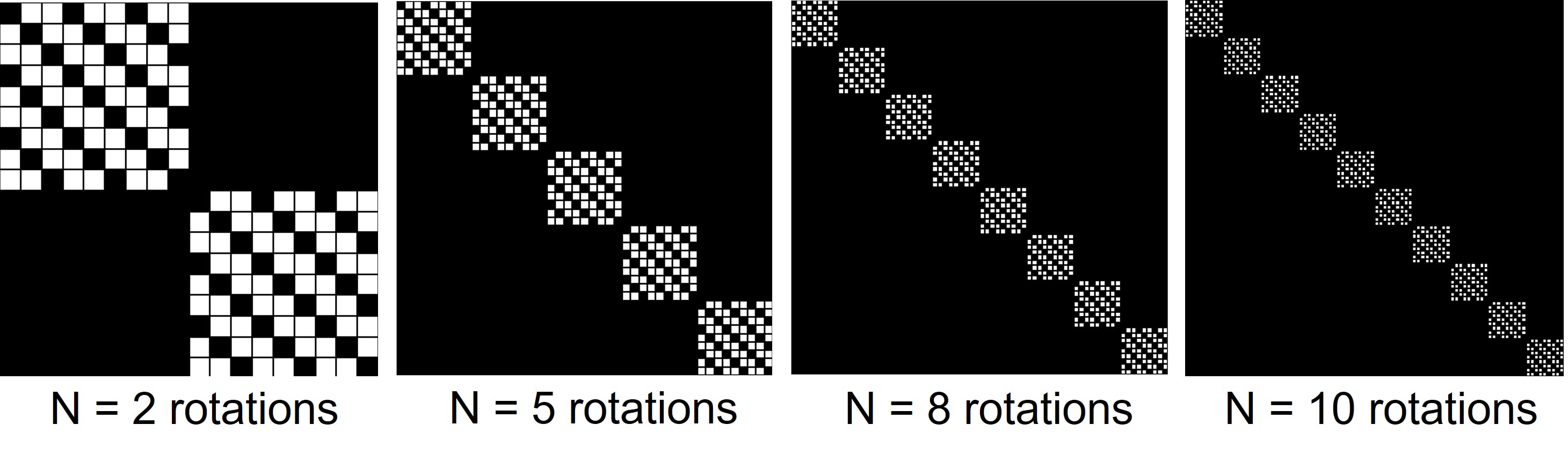} 
    \caption{Qualitative coupling matrix sparsity pattern visualization with increasing problem scale $N$ for MRA.} 
    \label{coupling_matrix} 
\end{figure}

\noindent We visualized the sparsity of the coupling matrix in the QUBO formulation before it gets uploaded to the annealer for execution.
The sparsity of the problem formulation determines how logical qubits---which represent the MRA solution update in IQARS---should interact with each other during the annealing process.
It also concerns the problem's embeddability on quantum hardware.
Typically, problems with a certain sparsity can admit efficient hardware mappings that reduce physical resource requirements and operational overhead.
We visualized the sparsity pattern of MRA that is employed in our algorithm in Fig.~\ref{coupling_matrix} with increasing problem sizes.
This can provide valuable guidelines for hardware embedding analysis.

\section{Visualizations of QPU (Pegasus Topology) Embedding} \label{embedding_analysis}

\noindent For completeness, we provide a qualitative visualization of the problem embeddings on the annealer of Pegasus topology; see Fig.~\ref{embedding_structure}.
Specifically, Fig.~\ref{embedding_structure}-(A) visualizes the hardware topology of the annealer.
Fig.~\ref{embedding_structure}-(B) to (D) visualizes the embedded MRA problems under the specific hardware topology.
The process of finding such a problem embedding is called \textit{minor embedding}.
Compared to the previous-generation Chimera architectures, the Pegasus topology enables more efficient embeddings of complex optimization problems and supports solving problems of larger scales.

\begin{figure*}[t] 
    \centering 
    \includegraphics[width=1.0\textwidth]{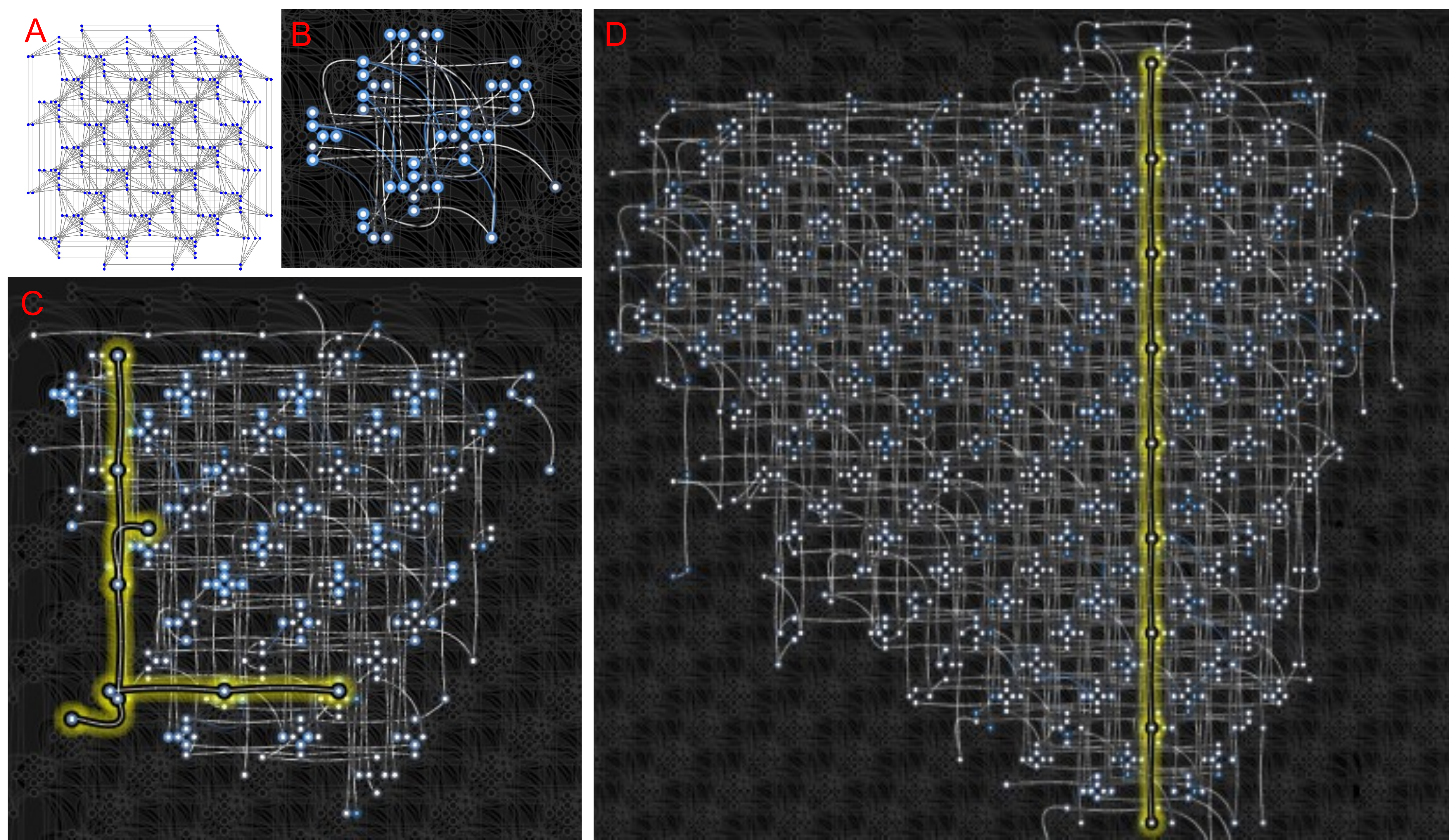} 
    \caption{(A) Pegasus topology structure; (B)–(D) Embedding visualization on QPUs (Pegasus topology) for MRA of problem sizes $N = 3, 5$ and $10$. 
    The corresponding longest chain for a logical qubit during the embedding is highlighted, respectively.
    }
    \label{embedding_structure} 
\end{figure*}

\section{Complementary Comparison with Classical Local Solvers} \label{s:local_solvers}

\noindent While both Levenberg-Marquardt~(LM) and trust-region methods leverage iterative frameworks analogous to IQARS, their implementations fundamentally differ in handling the $\mathrm{SO}(3)$ manifold's non-convexity.
We give further explanations as a supplement.
LM approximates second-order behavior through a damped Gauss-Newton scheme with Hessian approximation, while trust-region methods solve constrained quadratic subproblems within a radius.
Crucially, both approaches consistently convexify the $\mathrm{SO}(3)$ Riemannian geometry through Euclidean projections and can generate artificial critical points.
In contrast, IQARS preserves geometric integrity without convexification through: (1) exact $\mathrm{SO}(3)$ constraints via Prop.~\ref{proposition1}'s exponential maps, (2) adaptive trust-region optimization and orthogonality-promoting penalty, and (3) quantum annealing's Hamiltonian evolution enabling efficient exploration of a non-convex landscape via tunneling.
We perform additional quantitative empirical comparison with the mentioned solvers on noisy datasets with $N = 20$ and record the results in Tab.~\ref{Comparison_results}; the results confirm that IQARS maintains a clear advantage and further validates the theoretical claims.

\begin{table}[]
\scalebox{1.05}{
\begin{tabular}{@{}c|cccc@{}}
\toprule
\multicolumn{1}{c|}{Noise Levels} & $\pi/10 $ & $\pi/5 $ & $\pi/3 $ & $\pi/2 $ \\ \midrule
LM           & 0.2209                  & 0.4593                 & 0.7443                 & 1.1179                 \\
Trust-Region & 0.2341                  & 0.4793                 & 0.7839                 & 1.180                  \\
Ours         & \textbf{0.1929}                  & \textbf{0.3932}                 & \textbf{0.6388}                 & \textbf{0.9235}                 \\ \bottomrule
\end{tabular}
}
\caption{Performance comparison with local solvers on a synthetic noisy dataset across different noise levels $\sigma$.}
\label{Comparison_results}
\vspace{-15pt}
\end{table}

\section{Alternative Initialization Strategy} \label{s:init}

\noindent Our IQARS is configured with identity initialization by default to provide a neural and unbiased way for MRA and establish a baseline configuration free of initialization bias.
Notably, it also naturally accommodates alternative initialization schemes such as minimum spanning tree~(MST)~\cite{hartley2013rotation} which was originally integrated into L1-IRLS.
MST initialization provides a principled approach for generating initial rotation estimates that are geometrically consistent with relative measurements.
This especially benefits iterative algorithms such as ours under limited quantum resource budget.
We replace the original identity initialization protocol in IQARS with MST, and perform additional evaluations.
Other settings and configurations remain consistent as in primary experiments.
Quantitative results, as presented in Tab.~\ref{t:spanning_tree}, show that MST initialization under our configuration can indeed lead to significantly improved MRA performance.

\begin{table}[]
\centering
\begin{tabular}{@{}c|cccc@{}}
\toprule
Noise Ratios & $\pi/10 $ & $\pi/5 $ & $\pi/3 $ & $\pi/2 $ \\ \midrule
Ours (Identity init) & 0.193                     &   0.393                   &    0.639                  &     0.923                 \\
Ours (MST)        &   \textbf{0.177}                    &  \textbf{0.352}                    &   \textbf{0.573}                   &   \textbf{0.845}                   \\ \bottomrule
\end{tabular}
\caption{IQARS performance under different initialization schemes on synthetic noisy rotations of $N = 20$.}
\label{t:spanning_tree}
\vspace{-10pt}
\end{table}

\section{Quantum vs. Simulated Annealing Performance} \label{QA vs. SA}

\begin{figure}[t]
    \centering 
    \includegraphics[width = 0.48\textwidth]{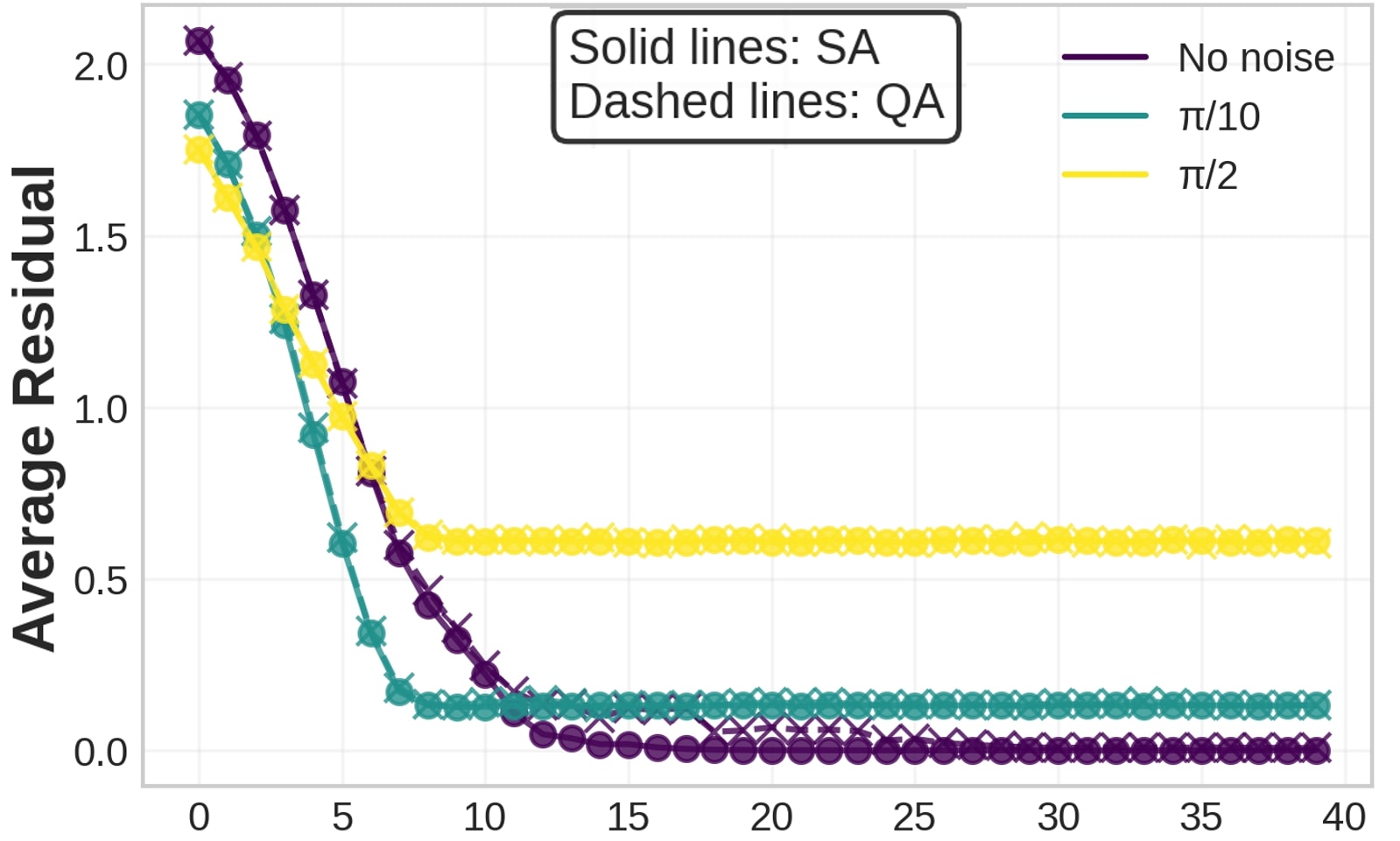} 
    \caption{MRA performance comparison between quantum and simulated annealing for the dataset across different noise levels.} 
    \label{sa_vs_qa}
    \vspace{-15pt}
\end{figure}

Besides quantum annealers, D-Wave also provides a classical simulated annealing (SA) solver.
It serves as a classical benchmark for evaluating quantum annealing performance, particularly in assessing the hardware noise effect on the solution quality.
SA, inspired by thermodynamic annealing in metallurgy, utilizes thermal fluctuations governed by the Metropolis-Hastings algorithm to escape local minima with state transitions following the Boltzmann acceptance criterion.
%
It operates classically, requiring \(\mathcal{O}(n^2)\) operations per iteration for \(n\)-variable QUBO problems due to pairwise interaction calculations.
In contrast, while the theoretical time complexity of quantum annealing is governed by the adiabatic theorem, annealing time in practice is by default set to $20\mu s$ on D-Wave machines; see set-up in Sec.~\ref{sec:experiments}.
We performed an empirical comparison with MRA on the dataset across various noise levels, and benchmarked QA against the SA solver; results are visualized in Fig.~\ref{sa_vs_qa}.
It is noticeable that SA exhibits marginally better convergence than quantum annealing executed on real noisy machines for a noiseless dataset.
With increasing noise levels, quantum annealers can achieve performance comparable to classical SA implementations.
We also record the clock time statistics.
During execution, SA takes around 1.32s while QA, including data transmission overhead, takes 0.38s per iteration in the experiment.
Considering SA's quadratic scaling, it is expected that the computational demand for SA becomes more pronounced for larger problem sizes.
These findings suggest that while QA algorithms leveraging practical quantum machines yet may still underperform SA in terms of performance, the inherent time efficiency and scalability suggest a compelling pathway for addressing computationally hard optimization problems in the future.

\section{Additional Visualizations} \label{s:additional}

We provide additional visualizations to complement the main text.
Fig.~\ref{q_resources} visualizes the physical qubit requirements for embedding IQARS QUBO instances on different generations of D-Wave annealers across different problem sizes $N$.

\begin{figure}[h]
    \centering 
    \includegraphics[width = 0.48\textwidth]{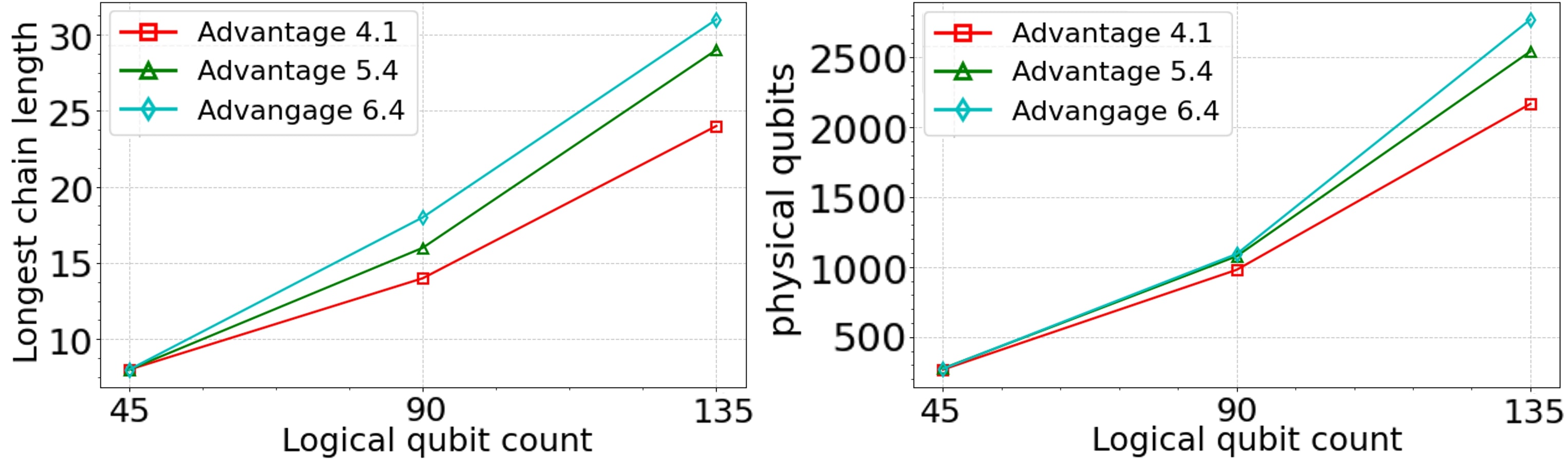} 
    \caption{Physical hardware resource occupation for logical problems of various sizes.}
    \label{q_resources}
    \vspace{-15pt}
\end{figure}